\documentclass[letterpaper]{article} 
\usepackage{aaai23}  
\usepackage{times}  
\usepackage{helvet}  
\usepackage{courier}  
\usepackage[hyphens]{url}  
\usepackage{graphicx} 
\urlstyle{rm} 
\usepackage{natbib}  
\usepackage{caption} 
\frenchspacing  
\setlength{\pdfpagewidth}{8.5in}  
\setlength{\pdfpageheight}{11in}  
%
\usepackage{algorithm}
\usepackage[noend]{algorithmic}

%
\usepackage{newfloat}
\usepackage{listings}
\DeclareCaptionStyle{ruled}{labelfont=normalfont,labelsep=colon,strut=off} 
\lstset{%
	basicstyle={\footnotesize\ttfamily},
	numbers=left,numberstyle=\footnotesize,xleftmargin=2em,
	aboveskip=0pt,belowskip=0pt,%
	showstringspaces=false,tabsize=2,breaklines=true}
\floatstyle{ruled}
\newfloat{listing}{tb}{lst}{}
\floatname{listing}{Listing}
%
\pdfinfo{
/TemplateVersion (2023.1)
}

\setcounter{secnumdepth}{0} 

%


\title{Direct Heterogeneous Causal Learning \\ for Resource Allocation Problems in Marketing}
\author{
    Hao Zhou\textsuperscript{\rm 1}\thanks{Corresponding author},
    Shaoming Li  \textsuperscript{\rm 1},
    Guibin Jiang  \textsuperscript{\rm 1},
    Jiaqi Zheng \textsuperscript{\rm 2},
    Dong Wang \textsuperscript{\rm 1}
}
\affiliations{


    \textsuperscript{\rm 1} Meituan, China \\
    \textsuperscript{\rm 2} State Key Laboratory for Novel Software Technology, Nanjing University, China \\
    \{zhouhao29, lishaoming, jiangguibin, wangdong07\}@meituan.com,  jzheng@nju.edu.cn

%
}

\usepackage{bibentry}

\usepackage{amsfonts}
\usepackage{amsmath}
\usepackage{amsthm}

\usepackage{booktabs} 
\usepackage{multirow}

\usepackage{graphicx}
\usepackage{float} 
\usepackage{subfigure}

\usepackage{enumerate}

\newtheorem{theorem}{Theorem}
\newtheorem{assumption}{Assumption}
\newtheorem{definition}{Definition}

\begin{document}

\maketitle

\begin{abstract}

Marketing is an important mechanism to increase user engagement and improve platform revenue, and heterogeneous causal learning can help develop more effective strategies. 
Most decision-making problems in marketing can be formulated as resource allocation problems and have been studied for decades. 
Existing works usually divide the solution procedure into two fully decoupled stages, i.e., machine learning (ML) and operation research (OR) --- the first stage predicts the model parameters and they are fed to the optimization in the second stage.
However, the error of the predicted parameters in ML cannot be respected and a series of complex mathematical operations in OR lead to the increased accumulative errors.
Essentially, the improved precision on the prediction parameters may not have a positive correlation on the final solution due to the side-effect from the decoupled design.

In this paper, we propose a novel approach for solving resource allocation problems to mitigate the side-effects. 
Our key intuition is that we introduce the decision factor to establish a bridge between ML and OR such that the solution can be directly obtained in OR by only performing the sorting or comparison operations on the decision factor.
Furthermore, we design a customized loss function that can conduct direct heterogeneous causal learning on the decision factor, an unbiased estimation of which can be guaranteed when the loss converges. 
As a case study, we apply our approach to two crucial problems in marketing: the binary treatment assignment problem and the budget allocation problem with multiple treatments. 
Both large-scale simulations and online A/B Tests demonstrate that our approach achieves significant improvement compared with state-of-the-art.



\end{abstract}

\section{Introduction}


Marketing is one of the most effective mechanisms for the improvement of user engagement and platform revenue. Thus, various marketing campaigns have been widely deployed in many online Internet platforms. 
For example, markdowns of perishable products in Freshippo are used to boost sales~\cite{Hua2021Markdowns}, coupons in Taobao Deals can stimulate user activity~\cite{Zhang2021BCORLE} and incentives in the Kuaishou video platform are offered to improve user retention~\cite{Ai2022LBCF}.

\begin{figure}[htbp]
\centering
\subfigure[Two-phase methods]{
\label{fig:TPM}
\includegraphics[width=\columnwidth]{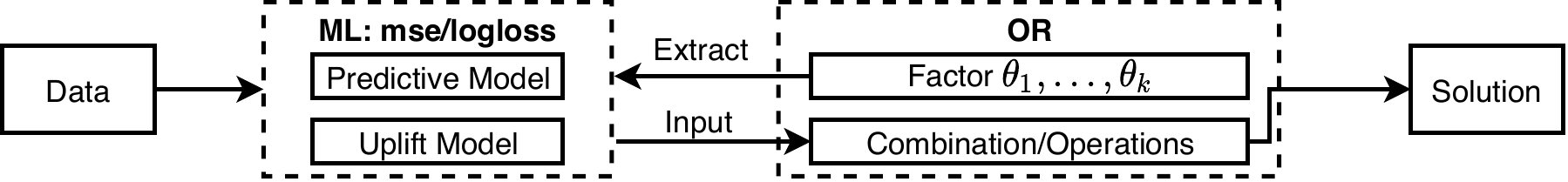}}
\centering
\subfigure[Our approach]{
\label{fig:decision-factor}
\includegraphics[width=\columnwidth]{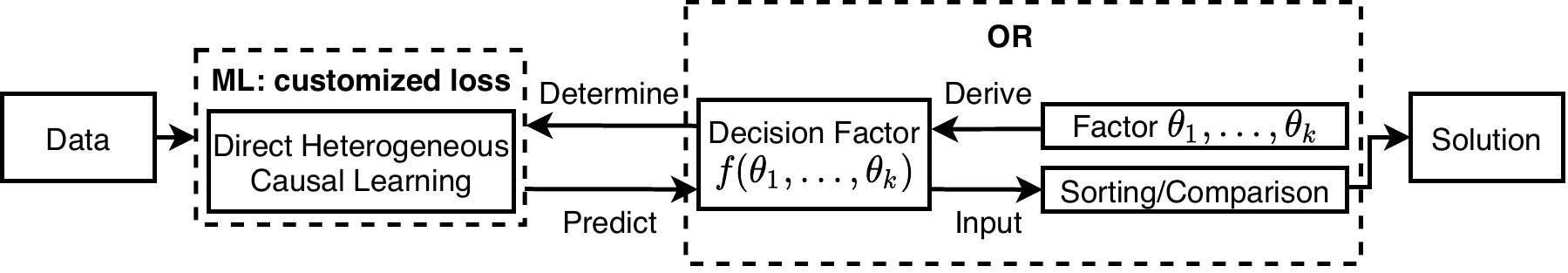}}
\caption{Comparison with two-phase methods}
\label{fig:comparison}
\end{figure}

Despite the incremental revenue, marketing actions can also incur significant consumption of marketing resources (e.g., budget). Hence, only part of individuals (e.g., shops or goods) may be assigned the marketing treatments due to the limited volume of them. 
In marketing, such decision-making problems can be formulated as resource allocation problems, and have been investigated for decades.

Most of existing studies usually use two-phase methods to solve these problems~\cite{Ai2022LBCF,Zhao2019Unified,Du2019Improve}. 
As is shown by Fig.~\ref{fig:TPM}, the first stage is ML, where the (incremental) response of individuals under different treatments is predicted by predictive/uplift models. The second stage is OR and the prediction results in ML are fed as the input to the combinatorial optimization algorithms.
Hence, existing works mainly focus on the decoupled optimization for predictive/uplift modeling and combinatorial optimization.

Despite the widespread application, there are two major defects in two-phase methods. 
The first one is that the solution is obtained after conducting many intermediate computations on the prediction results in ML, e.g., the combination of multiple factors or complex mathematical operations in OR. 
Therefore, the improved precision on the prediction parameters may not have a positive correlation on the final solution.
The second is that the errors of model prediction are not respected, and the complex operations performed on prediction results in OR lead to the increased accumulative errors.
Due to the accumulative errors, the theoretically optimal algorithm in OR cannot always achieve the practically optimum and is even inferior to heuristic strategies in some scenarios.
Therefore, the decoupled optimization for ML and OR cannot induce a global optimization for the original problem.


Instead of two-phase methods, we propose a novel approach for solving resource allocation problems to mitigate the above defects. 
First of all, we define the decision factor of an algorithm as a factor on which the solution can be directly obtained by performing only the sorting or comparison operations. As is shown by Fig.~\ref{fig:decision-factor}, the decision factor derived from OR is taken as the learning objective, and direct heterogeneous causal learning in ML is conducted in our approach. 
Based on the definition, there is no alternative mathematical operations on the prediction results in OR. 
Therefore, the ranking performance of one model on the decision factor directly determines the quality of the solution and improving the model can guarantee a better solution. 
Specifically, the model error can be used to measure the ranking performance instead, which is respected and not enlarged in OR.
Hence, the new challenges are how to identify such a decision factor in OR and how to make a direct prediction for it in ML.

Following this idea, we investigate two crucial problems in marketing. The first one is the binary treatment assignment problem. When ignoring the cost incurred by the treatment (the cost-unaware version), the conditional average treatment effect (CATE) can be regarded as the decision factor. The common uplift models to predict CATE include meta-learners~\cite{Kunzel2019Metalearners,Nie2021Quasi} and causal forests~\cite{Wager2018Estimation,Athey2019Generalized}.
The former are composed of multiple base models, and the latter usually combines generalized random forests (GRF) with double machine learning (DML) methods. 
Different from them, we propose a novel uplift model to make a direct prediction based on neural networks, which achieves good performance in both theory and practice. 
Despite the incremental revenue, the treatment can also incur different costs. In this cost-aware version, ROI (Return on Investment) of individuals can be regarded as the decision factor, which is calculated by the division of the incremental revenue and the incremental cost. However, most of existing works in causal inference did not involve the treatment cost and cannot apply to such a direct prediction. 
Although some works~\cite{Du2019Improve} investigated a similar problem to this, their loss function cannot converge to a stable extreme point in theory.
In this paper, we design a convex loss function to guarantee an unbiased estimation of ROI of individuals when the loss converges.

As the second case study, we apply our approach to the budget allocation problem with multiple treatments and propose a novel evaluation metric for this problem in this paper. 
The Lagrange duality is an effective algorithm to solve the budget allocation problem.
However, the decision factor in this algorithm contains the Lagrange multiplier that is uncertain and varies much with different budgets. The direct prediction for such a decision factor with all possible Lagrange multipliers is difficult and unrealistic. In this paper, we propose an equivalent algorithm to the Lagrange dual method while the decision factor in this algorithm is determined and irrelevant to the Lagrange multiplier. In addition, the corresponding causal learning model is developed and a direct prediction for the decision factor can be obtained when the customized loss function converges. Finally, we also propose a novel evaluation metric named MT-AUCC to estimate the prediction result, which is similar to Area Under Uplift Curve (AUUC)~\cite{Rzepakowski020Decision} but involves both multiple treatments and incremental cost.

Large-scale simulations and online A/B Tests validate the effectiveness of our approaches. In the offline simulations, we use two real-world datasets collected from random control trials (RCT) in the online advertising/food delivery platforms. 
Multiple evaluation metrics and online AB tests show that our models and algorithms achieve significant improvement and increase the target reward by over 10\% on average compared with state-of-the-art.

\section{Related Work}

{\bf Two-phase methods.} The composition of machine learning (ML) and operation research (OR) is one of the most common approaches to solve the resource allocation problem, which is called two-phase methods in this paper. In the first stage, uplift models are designed to predict the incremental response of individuals under different treatments. 
Besides meta-learners~\cite{Kunzel2019Metalearners,Nie2021Quasi} and causal forests~\cite{Wager2018Estimation,Athey2019Generalized,Zhao2017Uplift,Ai2022LBCF}, representation learning~\cite{Johansson2016Learning,Shalit2017Estimating,Yao2018Representation} was also developed for uplift modeling.
Instead of deriving an unbiased estimator, some works~\cite{Betlei2021Uplift,Kuusisto2014Support} proposed a unified framework for learning to rank CATE.
As one of the most effective algorithms, the Lagrangian duality is frequently used to solve many decision-making problems of different areas in the second stage.
For example, it was developed to solve the budget allocation problem in marketing~\cite{Du2019Improve,Ai2022LBCF,Zhao2019Unified} and compute the optimal bidding policy in online advertising~\cite{Hao2020Dynamic}.

\noindent{\bf Direct learning methods.} Policy learning and reinforcement learning are two important methods to directly learn a treatment assignment policy rather than the treatment effect, which avoids the combination of ML and OR. A general framework for policy learning with observational data was proposed based on the doubly robust estimator~\cite{Athey2021Policy}
and their work was extended to multi-action policy learning~\cite{Zhou2022Offline}.
As a real-world application, the works~\cite{Xiao2019Model,Zhang2021BCORLE} formulated the coupon allocation problem in sequential incentive marketing as a constrained Markov decision process and proposed reinforcement learning to solve it. 
However, all of the above methods moved the resource constraints into the reward function by using the Lagrangian multiplier. Hence, the model may need to be changed constantly with the variation of the Lagrangian multiplier.

\noindent{\bf Decision-focused learning (DFL).} Similar to our motivation, DFL devotes itself to learning the model parameters based on the downstream optimization task rather than the prediction accuracy. Nevertheless, many existing works in DFL required that the feasible region of the decision variables is fixed and known with certainty~\cite{Bryan2019DFL,Adam2022SPO,Shah2022LODL,Mandi2022Decision}. The most related work to ours is perhaps by~\citeauthor{Donti2017task}~\citeyear{Donti2017task}, which addressed a stochastic optimization problem that contains both probabilistic and deterministic constraints. However, this work supposed that the decision variables are continuous and dealt with the probabilistic constraints by the Lagrangian duality, which is markedly distinct from ours. 

\section{Binary Treatment Assignment Problem}


We begin with a common marketing scenario, where part of $M$ individuals are selected to receive the marketing action. 
We adopt the potential outcome framework~\cite{sekhon2008neyman} to formulate this problem. 
Let $X \in \mathbb{R}^d$ denote the feature vector and x its realization. 
Despite the incremental revenue, marketing actions can also incur significant costs. Let $Y^r,Y^c$ denote the revenue and the cost respectively, and $y^r, y^c$ its realization. 
Denote the treatment by $T \in \{0,1\}$ and its realization by $t_i$. Let $(Y^r(1), Y^r(0))$ and $(Y^c(1), Y^c(0))$ be the corresponding potential outcome when the individual receives the treatment or not. Define $\tau^r(x_i), \tau^c(x_i)$ as the conditional average treatment effect, which can be calculated by 
$$\tau^*(x_i) = E[Y^*(1)-Y^*(0)|X=x_i], * \in \{r,c\}.$$
Since most marketing actions have a positive effect on the response of an individual, we have $\tau^r(x_i) > 0$ and $\tau^c(x_i) > 0$.
The binary treatment assignment problem (BTAP) is to assign the treatment to part of the individuals to maximize the overall revenue on the platform, but requires that the incremental cost does not exceed a limited budget~$B$. Let $z_i \in \{0,1\}$ be the decision variables. Therefore, BTAP can be formulated as the integer programming problem~\eqref{BTAP}. 

\begin{align}
\max& \sum_i z_i \tau^r(x_i) \label{BTAP}\\
s.t.& \sum_i z_i \tau^c(x_i) \le B& \nonumber\\
& z_i \in \{0,1\}, \forall i.& \nonumber
\end{align}

This problem is an equivalent 0/1 knapsack problem, which is NP-Hard. Fortunately, the simple Greedy algorithm (Algorithm~\ref{alg:BTAG}) can achieve excellent performance whose approximation ratio satisfies $\rho \ge 1 - \frac{\max_i \tau(x_i)}{\mathrm{OPT}}$, where $\mathrm{OPT}$ is the optimal solution of the problem~\eqref{BTAP}.

\begin{definition}\label{def:decision-factor}
The decision factor of a combinatorial optimization algorithm is defined as a factor on which the final solution can be obtained by performing only the sorting or comparison operations.
\end{definition}

The decision factor is directly to the final solution of an algorithm and will be regarded as the learning objective in this paper. As is shown by Algorithm~\ref{alg:BTAG}, the factor $\tau^r(x_i)/\tau^c(x_i)$ can be taken as the decision factor, which is called as ROI (Return on Investment) of individual $i$. 

\begin{algorithm}[tb]
\caption{Greedy Algorithm for BTAP}
\label{alg:BTAG}
\textbf{Input}: $(\tau^r(x), \tau^c(x), B, M)$\\
\textbf{Output}: the solution to BTAP
\begin{algorithmic}[1] 
\STATE Sort $M$ individuals in descending order based on $\frac{\tau^r(x_i)}{\tau^c(x_i)}$.
\STATE Assign the treatment to individual $i$ in the sorted list until the budget~$B$ will be exceeded.

\end{algorithmic}
\end{algorithm}

\subsection{Cost-unaware Treatment Assignment Problem}

When the treatment cost is nonexistent or the same for all~the individuals (e.g., a push message), the prediction for ROI of individuals is reduced to the estimation of $\tau^r(x_i)$. 
The latter is an important problem in causal inference, and existing studies mainly involve meta-learners and causal forests. In this paper, we propose a novel uplift model to make a direct prediction for $\tau^r(x_i)$ or the rank of $\tau^r(x_i)$, which can achieve good performance in both theory and practice.

Following the above notations, suppose that there is a data set of size $N$ collected from random control trials (RCT) and denote the i-th sample by $(x_i, t_i, y^r_i)$.  Denote the sample size that receive the treatment or not by $N_1$ and $N_0$, respectively. Let $s_i = \hbar(x_i)$ represent a score to rank $\tau^r(x_i)$, where $\hbar(x_i)$ can be any machine learning model (e.g., linear regression or neural networks). Minimizing the loss function \eqref{DUM}, we can obtain an unbiased estimation of $\tau^r(x_i)$. 
Due to the space limit, the detailed analysis is presented by Theorem~\ref{theorem:DUM} and its proof in Appendix~A~\cite{Zhou2022DHCL}.

\begin{align}
\min& L(s) = - (\frac{1}{N_1}\sum_{i|t_i=1} y^r_i \ln q_i -\frac{1}{N_0}\sum_{i|t_i=0} y^r_i \ln q_i) \label{DUM}\\
s.t.& \ \ \ \ \ \ \ \ \ \ \ \ \ \ \ \ q_i = \frac{e^{s_i}}{\sum_i e^{s_i}}, \forall i& \nonumber\\
& \ \ \ \ \ \ \ \ \ \ \ \ \ \ \ \ s_i = \hbar(x_i) \in \mathbb{R}, \forall i& \nonumber
\end{align}

\begin{theorem} \label{theorem:DUM}
When the loss function~\eqref{DUM} converges, $s_i$ can be used to rank $\tau^r(x_i)$ and $q_i = \frac{\tau^r(x_i)}{\sum_i \tau^r(x_i)}$ can be used to obtain an unbiased estimation of $\tau^r(x_i)$. 
\end{theorem}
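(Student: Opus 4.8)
The plan is to characterize the minimizer of the loss~\eqref{DUM} in the population limit and to show that the optimum forces $q_i$ to be proportional to $\tau^r(x_i)$. Since $s_i=\hbar(x_i)$ depends only on the features, I would first pass from the empirical loss to its expectation. Writing $\mu_t(x)=E[Y^r(t)\mid X=x]$, the RCT assumption supplies two facts: conditional on the features the observed outcome satisfies $E[y^r_i\mid x_i,t_i{=}1]=\mu_1(x_i)$ and $E[y^r_i\mid x_i,t_i{=}0]=\mu_0(x_i)$, and the treated and control groups are random subsamples sharing a common feature distribution. Taking expectations and applying the law of large numbers to each normalized average, both sums converge to integrals against the same feature law, so that
$$E[L]\;\longrightarrow\;-E_X\big[(\mu_1(X)-\mu_0(X))\ln q(X)\big]\;=\;-E_X\big[\tau^r(X)\ln q(X)\big],$$
where I write $q(x)=e^{s(x)}/Z$ with $Z=\sum_k e^{s_k}$.

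Next I would resolve the softmax normalization. Substituting $\ln q(x)=s(x)-\ln Z$ and using $Z\to N\,E_X[e^{s(X)}]$ splits the limiting objective into $-E_X[\tau^r(X)s(X)]+E_X[\tau^r(X)]\,\ln E_X[e^{s(X)}]$ plus a term independent of $s$. This is a convex functional of $s$, which I would minimize by a pointwise (functional) derivative. Setting the derivative to zero yields $\tau^r(x)=E_X[\tau^r(X)]\cdot e^{s(x)}/E_X[e^{s(X)}]$, i.e. the normalized weight $w(x):=e^{s(x)}/E_X[e^{s(X)}]$ must equal $\tau^r(x)/E_X[\tau^r(X)]$. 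Translating back to discrete samples through $q_i=w(x_i)/N$ gives $q_i=\tau^r(x_i)/\sum_k\tau^r(x_k)$, exactly the claimed identity. Convexity of the objective makes this stationary point the global minimizer, and the positivity $\tau^r>0$ assumed in the model guarantees that the resulting $q$ is a genuine probability vector.

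Both conclusions then follow. For the ranking claim, the stationarity identity gives $s(x)=\ln\tau^r(x)-\ln E_X[\tau^r(X)]$, so $s_i$ is a strictly increasing transformation of $\tau^r(x_i)$ and hence sorts individuals in the same order as their true treatment effects, as required by the Greedy procedure in Algorithm~\ref{alg:BTAG}. For the estimation claim, $q_i\propto\tau^r(x_i)$ recovers $\tau^r$ up to the single global factor $\sum_k\tau^r(x_k)$, the total effect; multiplying $q_i$ by an unbiased estimate of this total, such as $N(\bar y^r_1-\bar y^r_0)$ with $\bar y^r_1,\bar y^r_0$ the treated and control outcome averages, returns $\tau^r(x_i)$.

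I expect the main obstacle to be the first step: justifying the replacement of the two separate empirical averages, over the treated and control subsamples, by expectations against a single common feature distribution, since $q_i$ itself depends on the entire realized sample through the normalizer $Z$. The argument must lean on the RCT design both to decouple the randomness of treatment assignment from the features and to control $Z$ uniformly; making this limit rigorous rather than formal is where the care is needed, whereas the subsequent variational computation is routine.
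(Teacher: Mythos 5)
Your proposal follows essentially the same route as the paper's own proof: both use SUTVA and the RCT independence to rewrite the loss as $-\sum_i \tau^r(x_i)\ln q_i$ (up to normalization) and then derive $q_i=\tau^r(x_i)/\sum_k\tau^r(x_k)$ from the stationarity condition of the softmax-parametrized objective, with the ranking claim following from monotonicity. Your explicit decomposition $\ln q = s - \ln Z$ and the convexity observation (which upgrades the stationary point to a global minimizer) are a small strengthening the paper does not state, but the argument is otherwise the same.
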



\subsection{Cost-aware Treatment Assignment Problem}

ROI of individuals is a composite object and most existing works predict it by the combination of multiple models. 
The latter may cause an enlargement of model errors due to the mathematical operations during combination.
Therefore, we propose a novel learning model for direct ROI prediction.

\begin{align}
\min& L(s) = - [\frac{1}{N_1}\sum_{i|t_i=1} (y^r_i \ln \frac{q_i}{1-q_i} + y^c_i \ln (1 - q_i)) - \nonumber\\ &\ \ \ \ \ \ \ \ \ \ \ \  \frac{1}{N_0}\sum_{i|t_i=0} (y^r_i \ln \frac{q_i}{1-q_i} + y^c_i \ln (1 - q_i))] \label{DRP}\\
s.t.& \ \ \ \ \ \ \ \ \ \ \ \ \ \ \ \ q_i = \sigma(s_i), \forall i& \nonumber\\
& \ \ \ \ \ \ \ \ \ \ \ \ \ \ \ \ s_i = \hbar(x_i) \in \mathbb{R}, \forall i.& \nonumber
\end{align}

Similarly, suppose that the data set $(x_i, t_i, y^r_i, y^c_i)$ of size $N$ is collected from RCT, and the count of the samples receiving the treatment or not is $N_1$ and $N_0$, respectively. 
The division operation may result in the high variance of ROI especially when $\tau^c(x_i)$ is small.
Therefore, the range of ROI is limited to $(0,1)$ by the scaling and truncating operations for $Y^r$ or $Y^c$ to decrease the risk of overfitting. Let $s_i = \hbar(x_i)$ represent a score to rank ROI, where $\hbar(x_i)$ can be any machine learning model. Define $\sigma(\cdot)$ as the sigmoid function. 
The loss function~\eqref{DRP} is designed to obtain an unbiased estimation of ROI or the rank of ROI for each individual. The detailed proof can be found in Theorem~\ref{theorem:DRP} and Appendix~B~\cite{Zhou2022DHCL}.

\begin{theorem} \label{theorem:DRP}
The loss function~\eqref{DRP} is convex , and $s_i$ can be used to rank ROI and $q_i = \frac{\tau^r(x_i)}{\tau^c(x_i)}$ is an unbiased estimation of ROI of individual $i$ when the loss converges. 
\end{theorem}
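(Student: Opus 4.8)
The plan is to reduce the loss to a transparent form, pass to the population limit using the RCT assumption, and then read off both convexity and the stationarity condition. First I would eliminate the sigmoid by substituting $q_i=\sigma(s_i)$, which yields the two identities $\ln\frac{q_i}{1-q_i}=s_i$ and $\ln(1-q_i)=-\ln(1+e^{s_i})$. Each summand then collapses to $y^r_i s_i - y^c_i\ln(1+e^{s_i})$, so the loss becomes a difference of treatment and control averages of this single expression. This is the key simplification: the score $s_i$ now enters only linearly and through the log-sum-exp term $\ln(1+e^{s_i})$.

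Next I would take expectations under the RCT design. Because treatment is randomized and independent of the potential outcomes, the empirical treatment and control averages converge to the conditional expectations $E[\cdot\mid T=1]$ and $E[\cdot\mid T=0]$, i.e.\ to expectations of the potential outcomes $Y^r(1),Y^c(1)$ and $Y^r(0),Y^c(0)$. Writing $s(x)=\hbar(x)$ and combining the two groups, the linear terms assemble into $-E[\tau^r(X)\,s(X)]$ and the log-sum-exp terms into $+E[\tau^c(X)\,\ln(1+e^{s(X)})]$, so the expected loss is $L=-E[\tau^r(X)\,s(X)]+E[\tau^c(X)\,\ln(1+e^{s(X)})]$. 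Convexity is then immediate: the first term is linear in $s$, while $\ln(1+e^{s})$ is convex and is weighted by $\tau^c(X)>0$, so the integrand is convex at every $x$ and convexity is preserved under the expectation.

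To obtain the unbiased estimate I would use the first-order optimality condition. Differentiating the integrand pointwise in $s(x)$ and using $\frac{d}{ds}\ln(1+e^{s})=\sigma(s)$ gives the stationarity equation $-\tau^r(x)+\tau^c(x)\,\sigma(s(x))=0$, hence $q(x)=\sigma(s(x))=\tau^r(x)/\tau^c(x)$, which is exactly the ROI of individual $x$; convexity guarantees that this stationary point is the global minimizer. Finally, since $\sigma$ is strictly increasing, ranking by $s_i$ coincides with ranking by $q_i$, and therefore with ranking by ROI.

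I expect the main obstacle to be the convexity claim itself: at the level of a finite sample with each $s_i$ treated as a free variable, the control-group contributions have the wrong-signed second derivative (their coefficient is the negative quantity $-y^c_i$), so convexity does not hold termwise. The argument must therefore genuinely pass through the expectation, where the treatment and control contributions for a common $x$ merge into the single nonnegative weight $\tau^c(x)$; I would be careful to invoke the RCT assumption (unconfoundedness together with constant propensity) to justify this merging. A secondary point to verify is that the target $\tau^r(x)/\tau^c(x)$ lies in $(0,1)$ so that it is attainable as $\sigma(s)$ — this is precisely what the scaling and truncation of $Y^r,Y^c$ into $(0,1)$ is designed to secure.
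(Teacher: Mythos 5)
Your proposal is correct and follows essentially the same route as the paper's proof: use the RCT assumption to rewrite the empirical loss as the population quantity $-\frac{1}{N}\sum_i[\tau^r(x_i)\ln\frac{q_i}{1-q_i}+\tau^c(x_i)\ln(1-q_i)]$, then verify convexity via the second derivative (positive because $\tau^c(x_i)>0$ and $q_i\in(0,1)$) and read off $q_i=\tau^r(x_i)/\tau^c(x_i)$ from the first-order condition, with the ranking claim following from monotonicity of $\sigma$. Your substitution $\ln\frac{q_i}{1-q_i}=s_i$, $\ln(1-q_i)=-\ln(1+e^{s_i})$ is a cosmetic but clean simplification of the same calculation, and your closing observation --- that convexity fails termwise on the raw finite-sample loss and only holds after the treatment and control contributions are merged into the nonnegative weight $\tau^c(x)$ via the expectation --- is accurate and makes explicit a subtlety the paper's proof passes over silently.
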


\section{Budget Allocation Problem with Multiple Treatments}

In this section, we will discuss a more general marketing scenario, which was also introduced in many existing works. 
Most of them solve this problem by two-phase methods.
Different from them, the machine learning model will be designed based on the decision factor to guarantee the consistency between the predictive objective and the optimization objective in this paper.

Suppose that there are multiples treatments and denote it by $T \in \{1,2,\ldots,L\}$. Let $r_{ij}$ and $c_{ij}$ be the revenue and the cost of individual $i$ receiving treatment $j$, respectively. In marketing campaigns, multiple treatments usually refers to the different level of the treatment, e.g., the different discount of some products [1], the different count of gold pieces on the online video platform [2], the different price of cinema tickets in different regions [3] and so on. Suppose that $j \in T$ represent the level of the marketing treatment and the treatment effect is larger if the level is higher. Therefore, we have $r_{ij} < r_{ik}$ and $c_{ij} < c_{ik}$ if $j < k$ holds. Given a limited budget $B$, the budget allocation problem with multiple treatments (MTBAP) is to assign a certain treatment to each individual with the objective of optimizing the overall revenue on the platform. Let $z_{ij} \in \{0,1\}$ be the decision variable to denote whether to assign treatment $j$ to individual $i$. Therefore, MTBAP can be formulated as the integer programming~\eqref{MTBAP}.

\begin{align}
\max& \sum_{ij} z_{ij} r_{ij} \label{MTBAP}\\
s.t.& \sum_{ij} z_{ij} c_{ij} \le B&\nonumber \\
& \sum_j z_{ij} = 1, \forall i&\nonumber \\
& z_{ij} \in \{0,1\}, \forall i,j.&\nonumber 
\end{align}

The first constraint refers to the limitation of the whole budget and the second requires that only one treatment can be assigned to each individual.

\subsection{Combinatorial Optimization Algorithm}

This problem is a classical multiple choice knapsack problem and remains NP-Hard. Existing studies usually solve this problem by using Lagrangian duality theory. Specifically, the upper bound of the original problem~\eqref{MTBAP} can be obtained by solving the following dual problem~\eqref{MTBAP-Dual}.

\begin{align}
&\min_{\alpha \ge 0} \left (
\begin{array}{c}
\max \alpha B + \sum_{ij} z_{ij} (r_{ij} - \alpha c_{ij}) \\
s.t. \ \sum_j z_{ij} = 1, \forall j \\
z_{ij} \in \{0,1\}, \forall i, j
\end{array}
\right ) \nonumber\\
= & \min_{\alpha \ge 0} (\alpha B + \sum_i \max_j (r_{ij} - \alpha c_{ij})) \label{MTBAP-Dual}
\end{align}

The optimal $\alpha^*$ for the dual problem can be derived by using the decreasing gradient algorithm or a binary search for $\alpha$ with the terminal condition of $B - \sum_{ij} z_{ij} c_{ij} \le \epsilon$. Given the optimal $\alpha^*$, an approximation solution for the original problem~\eqref{MTBAP} is
\begin{equation} \label{eq:dual-solution}
\forall i,j, \ \ z_{ij} = 1 \iff j = \arg\max_j r_{ij} - \alpha^* c_{ij}.
\end{equation}
According to Definition~\ref{def:decision-factor}, the factor $r_{ij} - \alpha^* c_{ij}$ is a decision factor for the Lagrangian duality algorithm and can be taken as the learning objective.
However, the value of $\alpha^*$ depends on the budget $B$,  which is undetermined and varies much with the marketing environment. Hence, there is only a limited number of $\alpha^*$ in the training dataset and it is possible that the optimal $\alpha^*$ used in future campaigns has never been seen by the predictive model before, which dramatically decreases the precision of the model. 
Therefore, the direct prediction of the decision factor for all possible $\alpha^*$ is difficult and unrealistic, and will not be considered in this paper.


However, another equivalent optimization algorithm can be derived from the above Lagrangian duality method. 
Before the details, we present an important assumption in economics, which is called as the Law of Diminishing Marginal Utility~\cite{polleit2011can}.

\begin{assumption}[The Law of Diminishing Marginal Utility]\label{assumption:law}
The marginal utility of individuals decreases with the increasing investment of the marketing cost. Specifically, denote the marginal utility by 
$$\ell_{ij} = \frac{r_{ij+1} - r_{ij}}{c_{ij+1} - c_{ij}},$$
and we have
$\ell_{ij} \le \ell_{ij-1}, \forall i,j.$
\end{assumption}




Let $(c_{ij}, r_{ij})$ be a point in rectangular coordinates. The value of $r_{ij} - \alpha c_{ij}$ can be regarded as the projection of $(c_{ij}, r_{ij})$ on $(-\alpha, 1)$. 
Let $j^*$ be the solution of Lagrangian duality methods in Eq.~\eqref{eq:dual-solution}, i.e., $j^* = \arg \max_j r_{ij} - \alpha c_{ij}$. Based on Assumption~\ref{assumption:law}, we can prove that it satisfies
$$\ell_{ij^*} \le \alpha \le \ell_{ij^*-1}.$$

\noindent Therefore, an equivalent optimization algorithm is obtained in Algorithm~\ref{alg:MTBAP}. Due to the space limit, the formal proof of Theorem~\ref{theorem:MTBAP-equivalent} can be found in Appendix~C~\cite{Zhou2022DHCL}.

\begin{algorithm}[tb]
\caption{An Equivalent Algorithm for MTBAP}
\label{alg:MTBAP}
\textbf{Input}: $(\ell_{ij}, B)$\\
\textbf{Output}: the solution to MTBAP
\begin{algorithmic}[1] 
\STATE $\alpha_l = 0, \alpha_r = \max_{ij} \ell_{ij}, z_{ij} = 0 \ \text{for} \  \forall i,j$.
\STATE /*The same binary search as Lagrangian duality.*/
\WHILE {$B - \sum_{ij} z_{ij} c_{ij} > \epsilon$}
\STATE $z_{ij} = 0 \ \text{for} \  \forall i,j.$
\STATE $\alpha = (\alpha_l + \alpha_r)/2.$
\FOR[Equivalent to $j^* = \arg\max_j r_{ij} - \alpha c_{ij}$]{$\forall i$}
\STATE $\ell_{i0} = +\infty, \ \ell_{iL} = 0.$
\STATE find $j^*$ satisfying $\ell_{ij^*} \le \alpha <\ell_{ij^*-1}.$
\STATE $z_{ij^*} = 1.$
\ENDFOR
\IF {$B - \sum_{ij} z_{ij} c_{ij} >= 0$}
\STATE $\alpha_r = \alpha.$
\ELSE
\STATE $\alpha_l = \alpha.$
\ENDIF
\ENDWHILE
\RETURN $z_{ij}$.

\end{algorithmic}
\end{algorithm}

\begin{theorem} \label{theorem:MTBAP-equivalent}
Algorithm~\ref{alg:MTBAP} is equivalent to the Lagrangian duality method.
\end{theorem}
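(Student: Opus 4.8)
The plan is to reduce the claimed equivalence to a single per-individual statement: for every fixed multiplier $\alpha \ge 0$, the threshold selection in line~8 of Algorithm~\ref{alg:MTBAP} returns exactly a maximizer of $r_{ij} - \alpha c_{ij}$, i.e.\ the level chosen by the Lagrangian rule~\eqref{eq:dual-solution}. Since both procedures run the identical outer binary search on $\alpha$ with the same termination test $B - \sum_{ij} z_{ij} c_{ij} \le \epsilon$, once the inner selections agree at every $\alpha$ the two algorithms generate the same assignment $z_{ij}$ at each iteration, hence converge to the same final solution. Thus the entire argument reduces to the inner equivalence.

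First I would fix an individual $i$ and a value $\alpha$, and study $f_j := r_{ij} - \alpha c_{ij}$ as a function of the discrete level $j$. The key computation is the forward difference
\begin{equation*}
f_{j+1} - f_j = (r_{i,j+1} - r_{ij}) - \alpha (c_{i,j+1} - c_{ij}) = (c_{i,j+1} - c_{ij})\,(\ell_{ij} - \alpha),
\end{equation*}
where the last equality uses the definition $\ell_{ij} = (r_{i,j+1} - r_{ij})/(c_{i,j+1} - c_{ij})$ from Assumption~\ref{assumption:law}. Because higher levels cost strictly more, $c_{i,j+1} - c_{ij} > 0$, so the sign of $f_{j+1} - f_j$ is exactly the sign of $\ell_{ij} - \alpha$.

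Next I would invoke the Law of Diminishing Marginal Utility (Assumption~\ref{assumption:law}), which makes $\ell_{ij}$ non-increasing in $j$. Combined with the boundary conventions $\ell_{i0} = +\infty$ and $\ell_{iL} = 0$ set in line~7 (and $\alpha \ge 0$), the differences $f_{j+1} - f_j$ change sign at most once as $j$ grows, passing from strictly positive to non-positive; hence $f_j$ is unimodal in $j$. Let $j^*$ be the smallest level with $\ell_{ij^*} \le \alpha$; it exists and is unique because $\ell_{i0} = +\infty > \alpha$ while $\ell_{iL} = 0 \le \alpha$, and it satisfies $\ell_{ij^*} \le \alpha < \ell_{i,j^*-1}$. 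By unimodality $f$ increases strictly for $j < j^*$ and is non-increasing for $j \ge j^*$, so $j^* \in \arg\max_j f_j$. This is precisely the inequality $\ell_{ij^*} \le \alpha \le \ell_{i,j^*-1}$ asserted in the text and exactly the level returned by the threshold test of line~8.

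The main obstacle, and the only place needing care, is the tie handling when the non-increasing sequence $\ell_{ij}$ has a plateau at the value $\alpha$: there several consecutive $f_j$ coincide and the Lagrangian $\arg\max$ is not unique. I would resolve this by noting that the strict inequality on the upper side of the threshold test, $\alpha < \ell_{i,j^*-1}$, forces Algorithm~\ref{alg:MTBAP} to pick the unique smallest maximizer, and by adopting the same deterministic tie-break for the Lagrangian rule~\eqref{eq:dual-solution}; all such maximizers share the same Lagrangian value, and along the binary search these ties occur only on a measure-zero set of $\alpha$. The two selection rules therefore coincide and the outer loops stay in lockstep, which together with the shared binary search establishes that Algorithm~\ref{alg:MTBAP} is equivalent to the Lagrangian duality method.
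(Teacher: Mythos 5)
Your proof is correct and follows essentially the same route as the paper's: the forward-difference identity $f_{j+1}-f_j=(c_{i,j+1}-c_{ij})(\ell_{ij}-\alpha)$ together with the monotonicity of $\ell_{ij}$ from Assumption~\ref{assumption:law} is exactly the content of the paper's two ``Properties'' (any Lagrangian maximizer satisfies the threshold inequalities, and any level satisfying them is a maximizer), with your unimodality framing playing the role of the paper's chained inequalities. The only additions are your explicit treatment of the shared outer binary search and of tie-breaking on plateaus of $\ell_{ij}$, both of which the paper leaves implicit; they are welcome but do not change the argument.
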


Notice that the factor $\ell_{ij}$ in Algorithm~\ref{alg:MTBAP} can be taken as a decision factor, which is irrelevant to the 
Lagrangian multiplier $\alpha$. Therefore, the value of $\ell_{ij}$ (or implicitly, the rank of $\ell_{ij}$ for all $i,j$) can be taken as the learning objective, which avoids the difficulty of directly predicting $r_{ij} - \alpha c_{ij}$ for all possible $\alpha$ in the Lagrangian duality method.

\subsection{Direct Prediction Model}

Based on the above analysis, we propose a novel method for a direct prediction of $\ell_{ij}$ in this subsection. Suppose that there is a data set $(x_i, t_i, y^c_i, y^r_i)$ of size $N$ collected from RCT, and denote by $N_j$ the count of samples receiving treatment $j$. Similarly, the range of $\ell_{ij}$ is limited to $(0,1)$ by scaling and truncating operations for $Y^r$ or $Y^c$ to reduce the risk of overfitting. Let $s_{ij} = \hbar(x_i, j)$ be the prediction of the rank of $\ell_{ij}$, where $\hbar(\cdot)$ is any machine learning model. Hence, minimize the loss function~\eqref{MTBAP-loss} and we can get the unbiased estimation of $\ell_{ij}$. The detailed analysis is shown by Theorem~\ref{theorem:MTBAP-loss} and its proof in Appendix~D~\cite{Zhou2022DHCL}.

\begin{align}
\min& L(s) = - [\sum_{j>1} \sum_{i|t_i = j} \frac{1}{N_j} (q_{ij-1} y^r_i - q^2_{ij-1} y^c_i) - \nonumber\\ &\ \ \ \ \ \ \ \ \ \ \ \  \sum_{j<L} \sum_{i|t_i = j} \frac{1}{N_j} (q_{ij} y^r_i - q^2_{ij} y^c_i)] \label{MTBAP-loss}\\
s.t.& \ \ \ \ \ \ \ \ \ \ \ \ \ \ \ \ q_{ij} = \sigma(s_{ij}), \forall i, j& \nonumber\\
& \ \ \ \ \ \ \ \ \ \ \ \ \ \ \ \ s_{ij} = \hbar(x_i, j) \in \mathbb{R}, \forall i, j.& \nonumber
\end{align}

\begin{theorem}\label{theorem:MTBAP-loss}
When the loss function~\eqref{MTBAP-loss} converges, $s_{ij}$ can be used to rank $\ell_{ij}$ and $q_{ij}$ can be used to obtain the unbiased estimation of $\ell_{ij}$. 
\end{theorem}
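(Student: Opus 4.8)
The plan is to mirror the strategy that must underlie Theorems~\ref{theorem:DUM} and~\ref{theorem:DRP}: show that the stationarity condition of the loss~\eqref{MTBAP-loss} forces each $q_{ij}$ to equal the target $\ell_{ij}$ in expectation, so that at convergence $q_{ij}$ is an unbiased estimate of $\ell_{ij}$ and the monotone map $\sigma$ makes $s_{ij}$ a valid ranking score. First I would pin down what ``converges'' means operationally: since $\hbar$ is a flexible model, I treat each $s_{ij}$ (equivalently each $q_{ij}=\sigma(s_{ij})$) as a free variable and take the gradient of the empirical loss with respect to a fixed $q_{ij}$, setting it to zero. The key observation is that a given $q_{ij}$ appears in exactly two groups of terms: the ``$j>1$'' block through the index pair $(i,j{+}1)$ where it plays the role of $q_{i,(j+1)-1}$, and the ``$j<L$'' block through the pair $(i,j)$ itself. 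Collecting these, the derivative of the per-sample contribution $q_{ij}y^r - q_{ij}^2 y^c$ with respect to $q_{ij}$ is $y^r - 2q_{ij}y^c$, which is the engine of the whole argument.

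Next I would take expectations under the RCT sampling. Because treatments are randomized, conditioning on $X=x_i$ decouples the potential outcomes from the treatment assignment, so the empirical group averages $\frac{1}{N_j}\sum_{i|t_i=j}(\cdot)$ converge to conditional expectations $E[\,\cdot\mid X=x_i, T=j]=E[\,\cdot\mid X=x_i]$ evaluated at the appropriate treatment level. Setting the expected gradient to zero at the optimum then yields, for each $i$ and each interior level $j$, a balance equation of the form
\begin{equation}
E[Y^r(j{+}1)-Y^r(j)\mid X=x_i] - 2\,q_{ij}\,E[Y^c(j{+}1)-Y^c(j)\mid X=x_i] = 0,\nonumber
\end{equation}
where the revenue and cost increments arise from subtracting the $t_i=j$ contribution (in the $j<L$ block) from the $t_i=j{+}1$ contribution (in the $j>1$ block). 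Recognizing the numerator and denominator of the marginal utility from Assumption~\ref{assumption:law}, this rearranges to $q_{ij} = \tfrac{1}{2}\,\frac{r_{i,j+1}-r_{ij}}{c_{i,j+1}-c_{ij}} = \tfrac{1}{2}\ell_{ij}$; the constant $\tfrac12$ is absorbed by the stated scaling/truncation of $Y^r,Y^c$ into $(0,1)$, so after that normalization $q_{ij}=\ell_{ij}$ exactly. Once $q_{ij}=\ell_{ij}$ is established, monotonicity of $\sigma$ gives that $s_{ij}=\sigma^{-1}(q_{ij})$ is an order-preserving transform of $\ell_{ij}$, so sorting by $s_{ij}$ recovers the ranking used by Algorithm~\ref{alg:MTBAP}, completing both claims.

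The main obstacle I anticipate is the bookkeeping of the paired blocks: I must verify that for each fixed $(i,j)$ the two occurrences of $q_{ij}$ combine to produce the \emph{difference} of adjacent-level outcomes rather than a spurious cross term, and that the boundary levels $j=1$ and $j=L$ (which appear in only one block) do not define an $\ell_{ij}$ that the theorem claims — consistent with the convention $\ell_{i0}=+\infty,\ \ell_{iL}=0$ used in Algorithm~\ref{alg:MTBAP}. A secondary subtlety is justifying that unbiasedness survives the scaling/truncation: I would argue that scaling $Y^r,Y^c$ by a common positive constant leaves the ratio $\ell_{ij}$ invariant, and that truncation is innocuous on the support where the RCT data actually lies, so the estimator remains unbiased for the (rescaled) marginal utility and, in particular, preserves its rank. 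The convexity-type behavior guaranteeing that the stationary point is the unique minimizer follows because each per-sample term $q\,y^r - q^2 y^c$ is concave in $q$ with $y^c>0$, so the negated loss is convex in each $q_{ij}$, ensuring the first-order condition is both necessary and sufficient at convergence.
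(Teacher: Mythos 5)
Your proposal follows essentially the same route as the paper's proof: re-index the two blocks so that each $q_{ij}$ (for $j<L$) is paired across the $t_i=j{+}1$ and $t_i=j$ groups, pass from the inverse-propensity-weighted empirical sums to population expectations via SUTVA and the RCT independence $(Y^*(j)\perp T)$, reduce the loss to $-\frac{1}{N}\sum_{j<L}\sum_i\bigl[q_{ij}\tau^r_j(x_i)-q_{ij}^2\tau^c_j(x_i)\bigr]$, and read off the stationarity condition $\tau^r_j(x_i)-2q_{ij}\tau^c_j(x_i)=0$, i.e.\ $q_{ij}=\tfrac12\ell_{ij}$. That matches the paper step for step, including the handling of the boundary levels and the $y^r-2q\,y^c$ ``engine.''

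One remark in your write-up is wrong, though it does not sink the theorem: you claim the factor $\tfrac12$ is ``absorbed by the scaling/truncation of $Y^r,Y^c$,'' so that $q_{ij}=\ell_{ij}$ exactly. It is not. The $\tfrac12$ comes from differentiating the $q_{ij}^2$ term and is structural: whatever fixed rescaling you apply to the outcomes defines a rescaled $\ell_{ij}$, and the stationary point is still $q_{ij}=\tfrac12\ell_{ij}$ for that rescaled quantity. The paper stops at $q_{ij}=\tfrac12\ell_{ij}$, which suffices for both claims --- the map $\ell\mapsto\tfrac12\ell$ is strictly increasing, so $s_{ij}$ ranks $\ell_{ij}$, and $2q_{ij}$ is the unbiased estimate. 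You should drop the absorption claim and state the conclusion in this form. A second, smaller caution: your convexity aside (``each per-sample term is concave, so the negated loss is convex in each $q_{ij}$'') only holds for the population-level loss after the two blocks are combined into $\tau^c_j(x_i)>0$; the raw empirical loss contains per-sample terms entering with both signs, so it is not termwise convex, and convexity in $s_{ij}$ (rather than $q_{ij}$) is a further step the paper does not claim for this loss.
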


\subsection{An Evaluation Metric}

Although AUUC (Area Under Uplift Curve) and AUCC (Area Under Cost Curve)~\cite{Du2019Improve} have been developed to evaluate the ranking performance of uplift models without/with treatment cost respectively, there is no evaluation metric for the estimation of marginal utilities ($\ell_{ij}$) under different treatments. The latter is directly related to the business objective of MTBAP. Therefore, we propose a novel evaluation metric for such a purpose, which is called as MT-AUCC (Area Under Cost Curve for Multiple Treatments) in this paper. 

\textbf{Cost Curve for Multiple Treatments.} Suppose that there is a model $\mathcal{M} = f(x_i, j)$ to predict the value (or the rank) of $\ell_{ij}$. Firstly, we obtain two new quintuple sets based on $\mathcal{M}$.
\begin{itemize}
\item For each sample $(x_i, t_i, y^c_i, y^r_i)$ with $t_i < L$, use model $\mathcal{M}$ to obtain a score $S_i = f(x_i, t_i)$ and get a new quintuple set $\widetilde{T}= \{(x_i, t_i, \alpha_i y^c_i, \alpha_i y^r_i, S_i) | t_i < L,  \alpha_i = \frac{N}{N_{t_i}}\}$;
\item for each sample $(x_i, t_i, y^c_i, y^r_i)$ with $t_i > 1$, use this mode again to obtain $S_i = f(x_i, t_i - 1)$ and get a quintuple set $\widetilde{C} = \{(x_i, t_i, \alpha_i y^c_i, \alpha_i y^r_i, S_i)| t_i > 1, \alpha_i = \frac{N}{N_{t_i}}\}$.
\end{itemize}
Notice that the weight $\alpha_i$ is used to balance the count of samples under different treatments. Next regard $\widetilde{T}$ and $\widetilde{C}$ as the new treatment group and control group respectively. Sort all the quintuples in $\widetilde{T}$ and $\widetilde{C}$ in descending order of $S_i$. For the top $k$ quintuples in this sorted list, denote by $\widetilde{T}(k)$ (or $\widetilde{C}(k)$) the quintuples that belong to $\widetilde{T}$ (or $\widetilde{C}$). Therefore, the incremental cost $\Delta Y^c(k)$ and reward $\Delta Y^r(k)$ can be calculated for each point $1 \le k \le |\widetilde{T}| + |\widetilde{C}|$ by Eq.~\eqref{Eq:MT-AUCC-Y}.
\begin{align}
\label{Eq:MT-AUCC-Y}
&\Delta Y^*(k) = \frac{k}{|\widetilde{T}| + |\widetilde{C}|} \left( \frac{\sum_{i \in \widetilde{T}(k)} \alpha_i y_i^*}{| \widetilde{T}(k)|} - \frac{\sum_{i \in \widetilde{C}(k)} \alpha_i y_i^*}{| \widetilde{C}(k)|} \right) , \nonumber \\ 
&\text{where} \ \ * \in \{r,c\} 
\end{align} 
As is shown in Fig.~\ref{fig:MT-AUCC}, take the tuple $(\Delta Y^c(k), \Delta Y^r(k))$ as the coordinates and we can get a cost curve. 

Denote by $\Delta Y_c$ and $\Delta Y_r$ the average incremental cost and reward of all the samples in $\widetilde{T}$ and $\widetilde{C}$, which satisfies $\Delta Y_* = \Delta Y^*(|\widetilde{T}| + |\widetilde{C}|)$. For convenience of calculations, we can also use some points with $p$ percent of $\Delta Y_c$ to draw the cost curve. In addition, the X and Y axis in this curve can be normalized by being divided by $\Delta Y_c$ and $\Delta Y_r$ respectively.

\textbf{Area Under Cost Curve for Multiple Treatments (MT-AUCC).} Similar to AUUC and AUCC, the area under this cost curve can be regarded as an evaluation metric. As is shown by Fig.~\ref{fig:MT-AUCC}, denote by $A_M$ and $A_R$ the area under a model curve and a random benchmark curve respectively. In order to bound the result within $(0, 1]$, MT-AUCC of this model is defined as $A_M/2A_R$.

\begin{figure}[htbp]
\centering  
\includegraphics[width=0.35\textwidth]{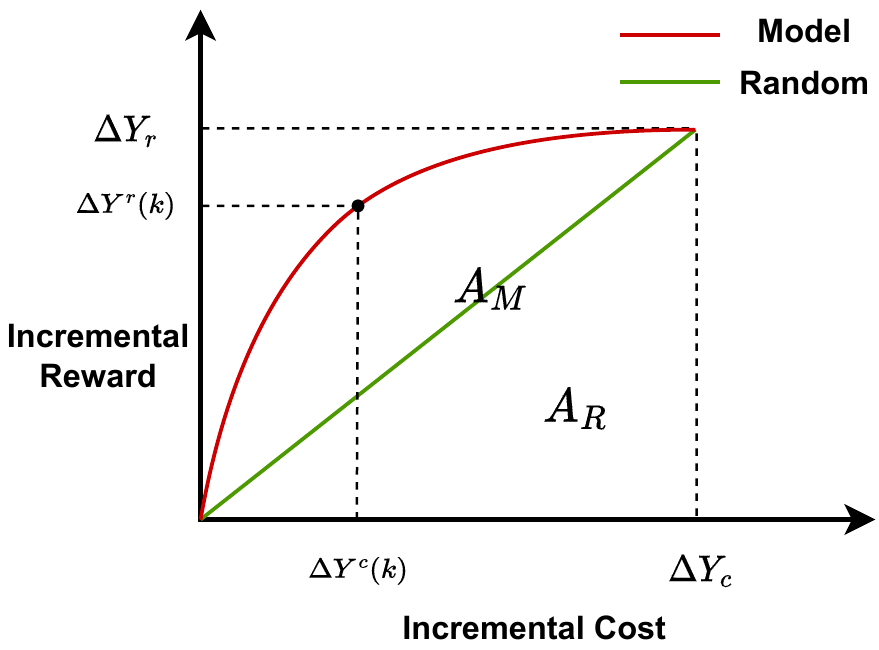}
\caption{MT-AUCC}
\label{fig:MT-AUCC}
\end{figure}

\section{Evaluation}

In this section, we will conduct large-scale offline and online numerical experiments to validate the performance of our models and algorithms. 

\begin{figure*}[htbp]
\centering  
\subfigure[AUUC (CRITEO-UPLIFT v2)]{
\label{fig:evaluation-1}
\includegraphics[width=0.24\textwidth]{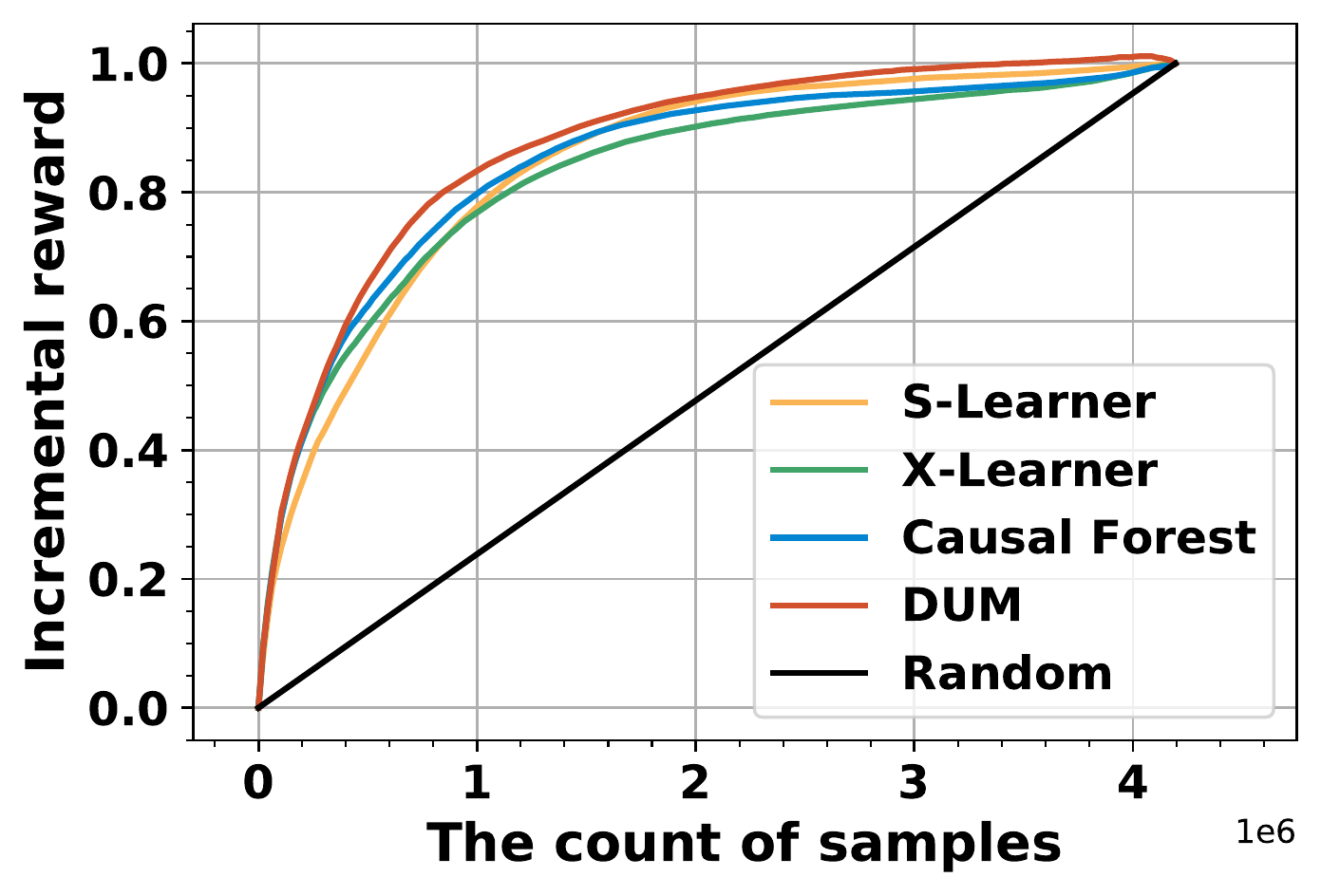}}
\subfigure[AUUC (Marketing data)]{
\label{fig:evaluation-2}
\includegraphics[width=0.24\textwidth]{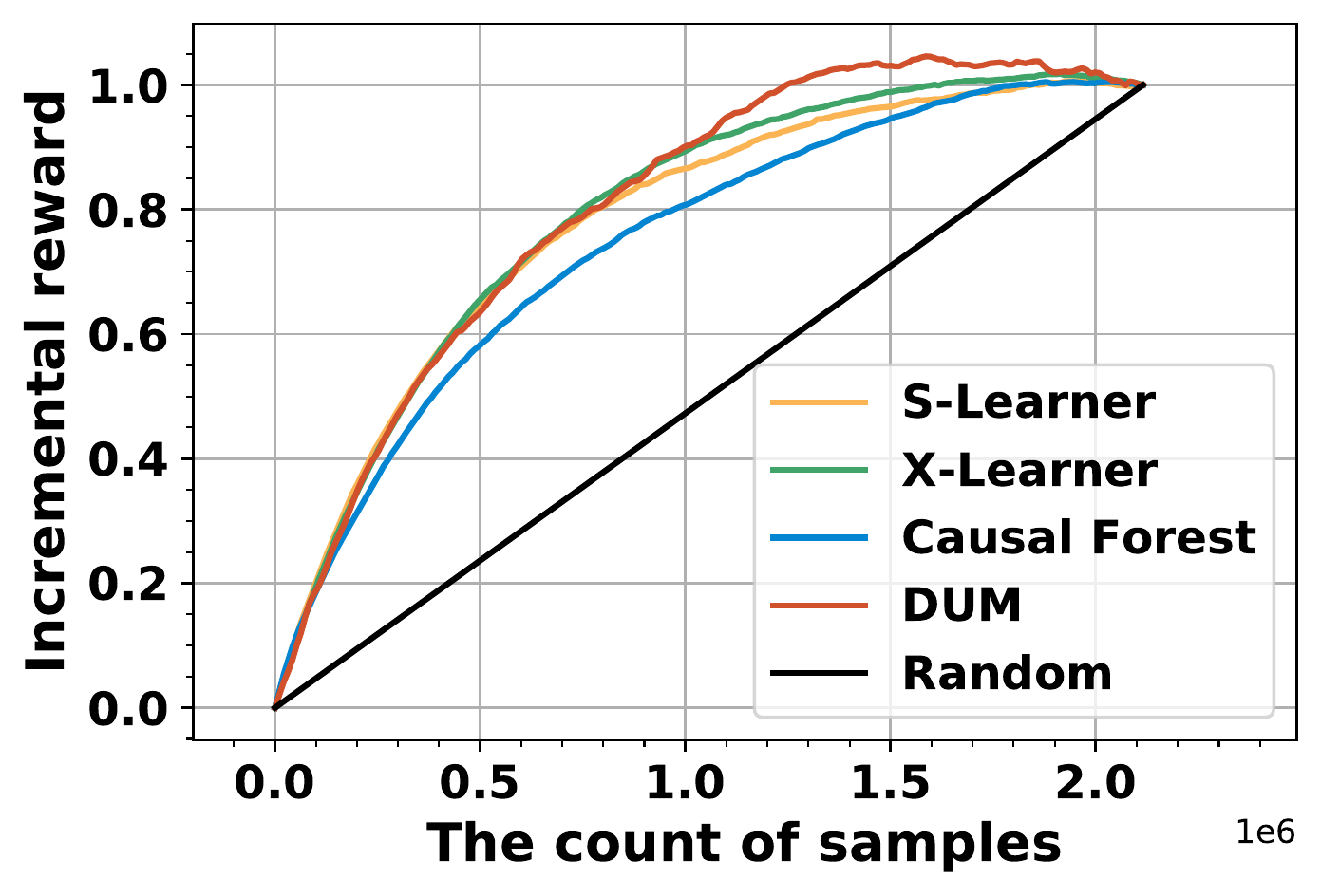}}
\subfigure[AUCC (CRITEO-UPLIFT v2)]{
\label{fig:evaluation-3}
\includegraphics[width=0.24\textwidth]{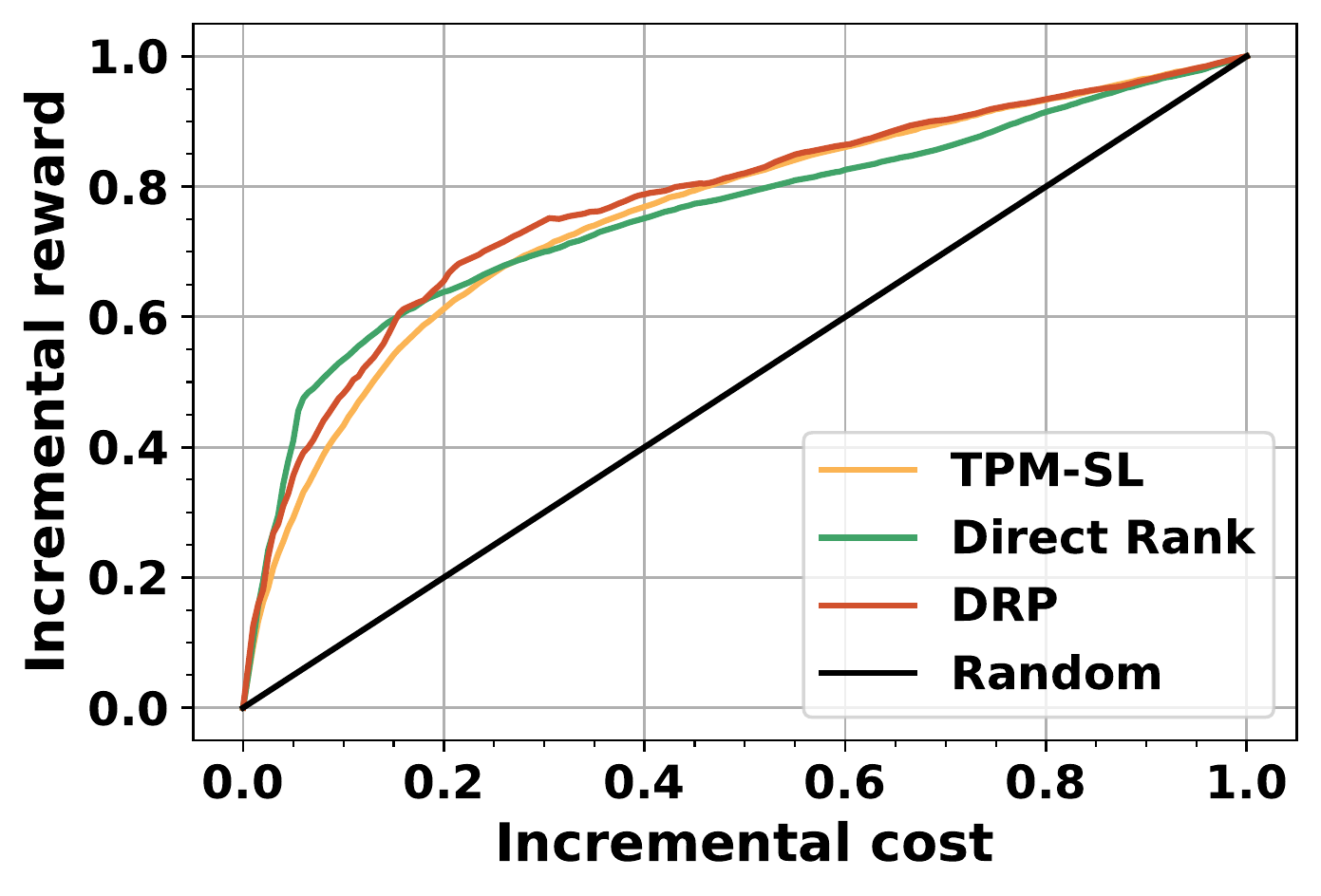}}
\subfigure[AUCC (Marketing data)]{
\label{fig:evaluation-4}
\includegraphics[width=0.24\textwidth]{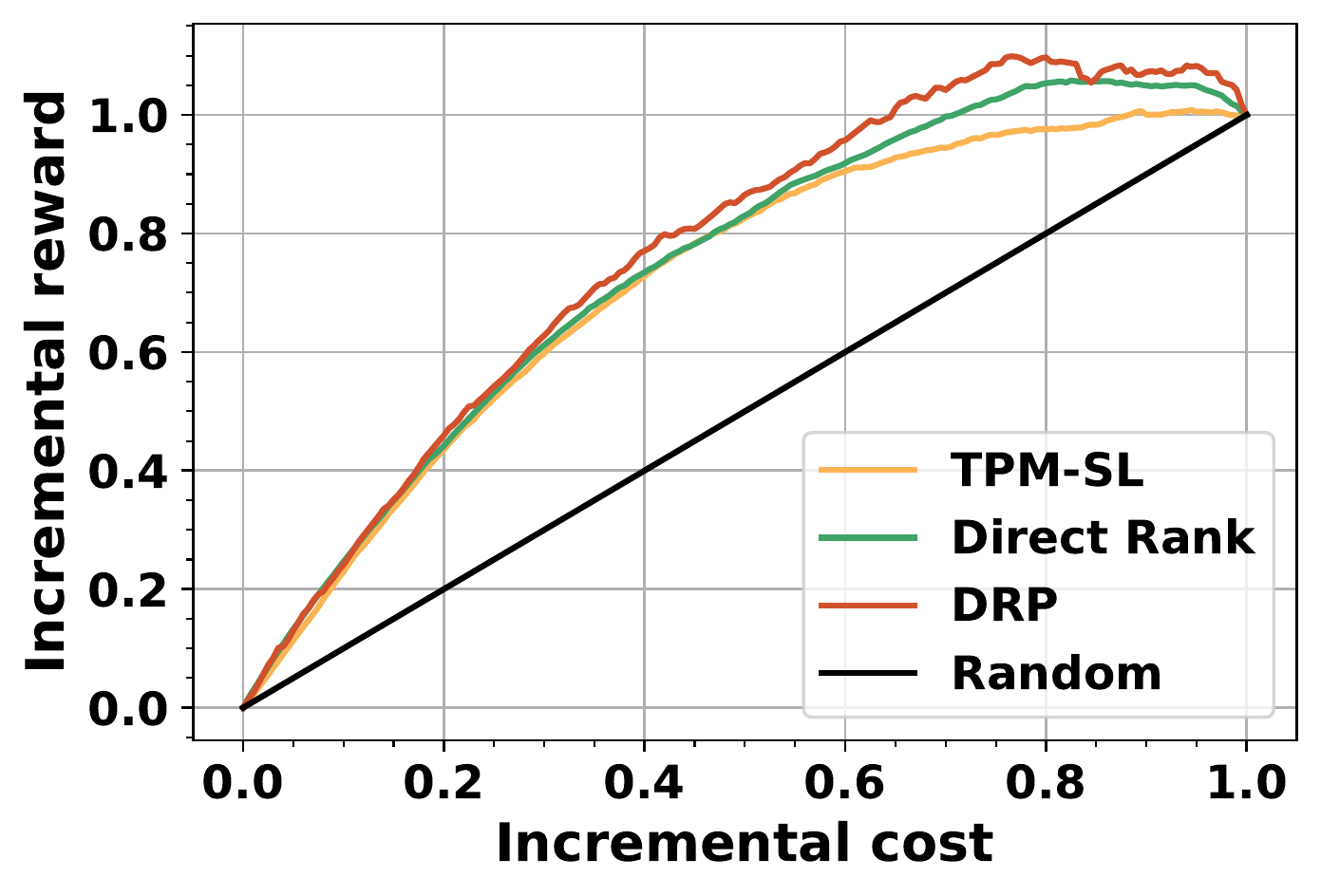}}

\subfigure[MT-AUCC (Marketing data)]{
\label{fig:evaluation-5}
\includegraphics[width=0.24\textwidth]{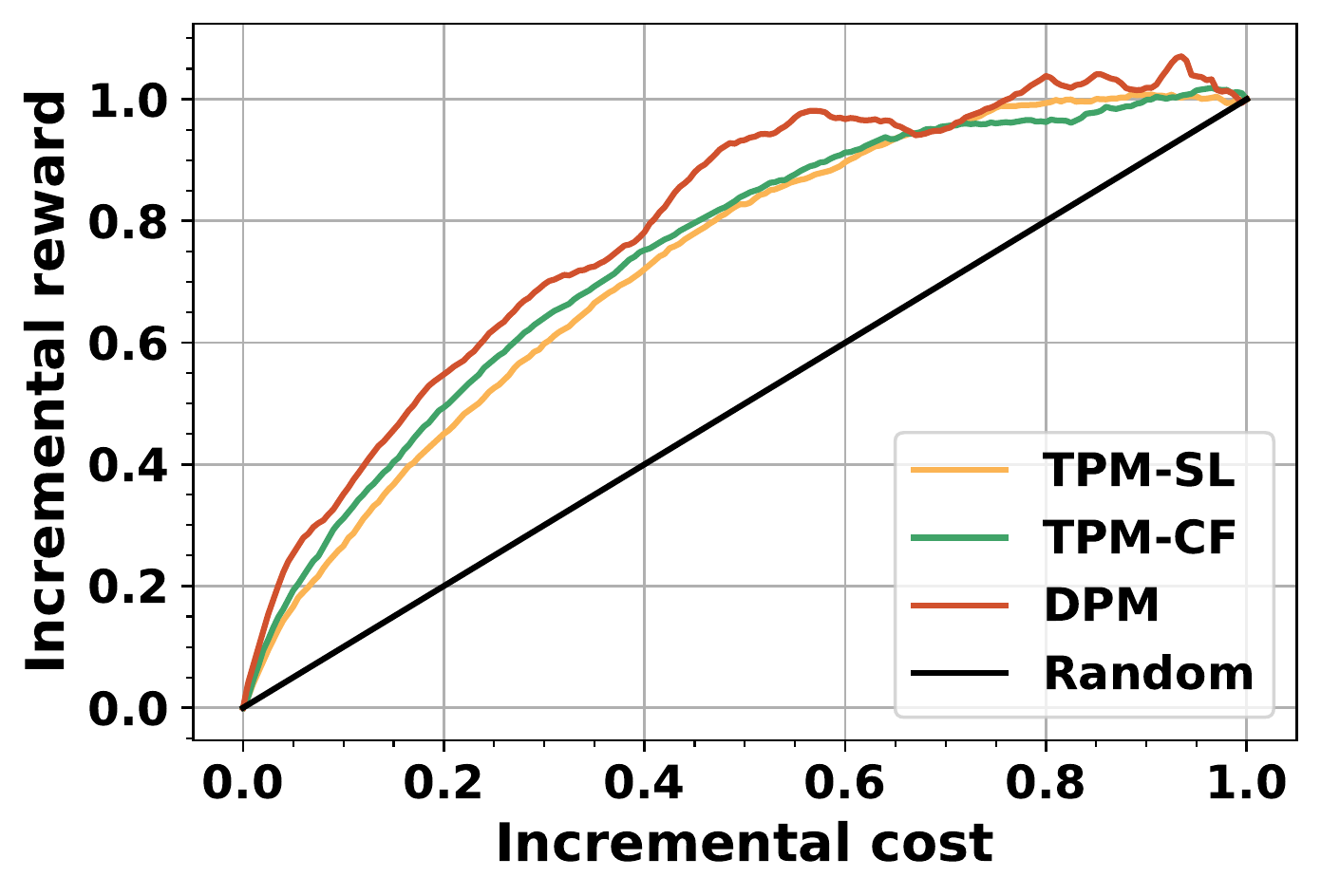}}
\subfigure[EOM (Marketing data)]{
\label{fig:evaluation-6}
\includegraphics[width=0.233\textwidth]{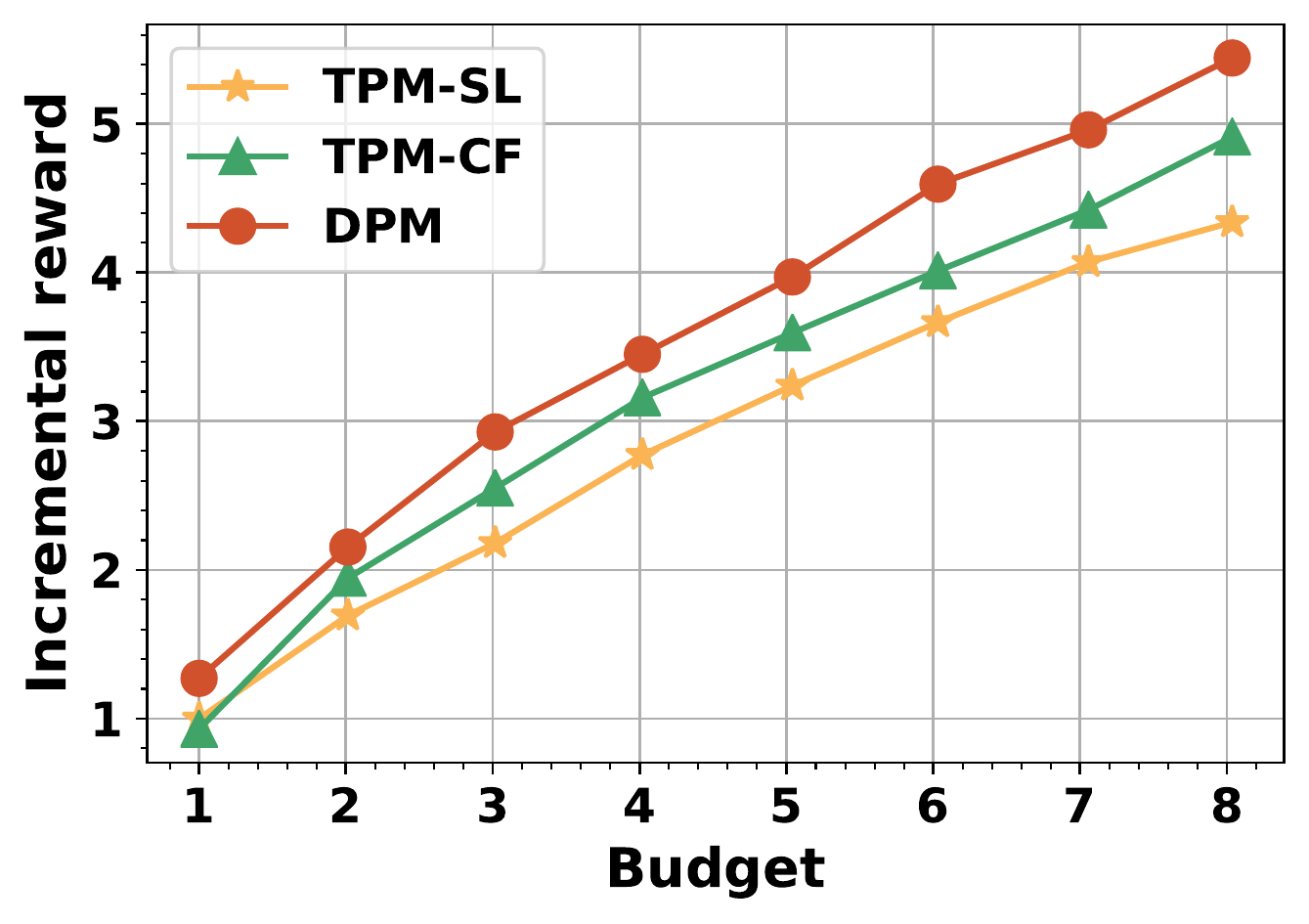}}
\subfigure[Orders (Online AB Test)]{
\label{fig:evaluation-7}
\includegraphics[width=0.24\textwidth]{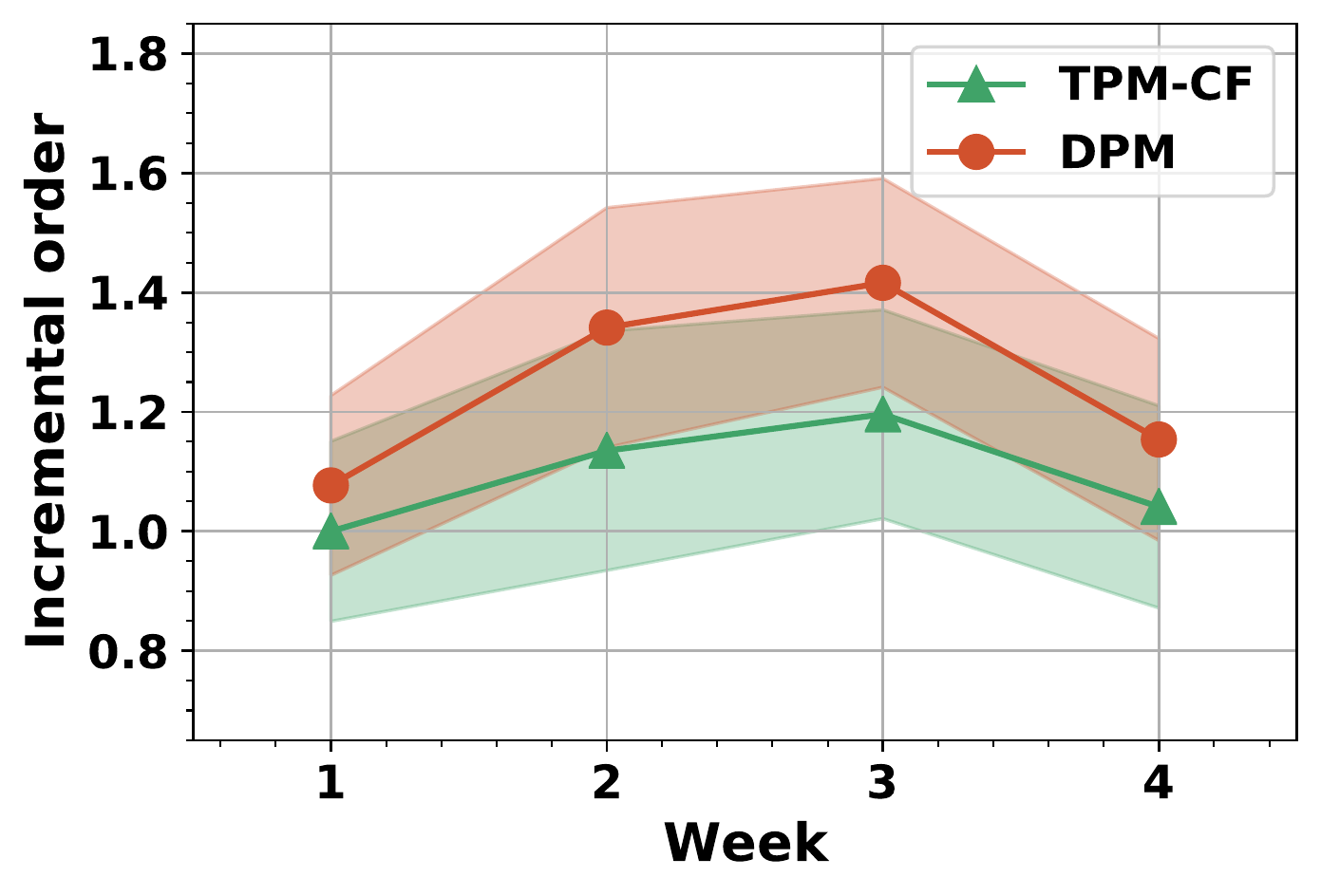}}
\subfigure[GMV (Online AB Test)]{
\label{fig:evaluation-8}
\includegraphics[width=0.24\textwidth]{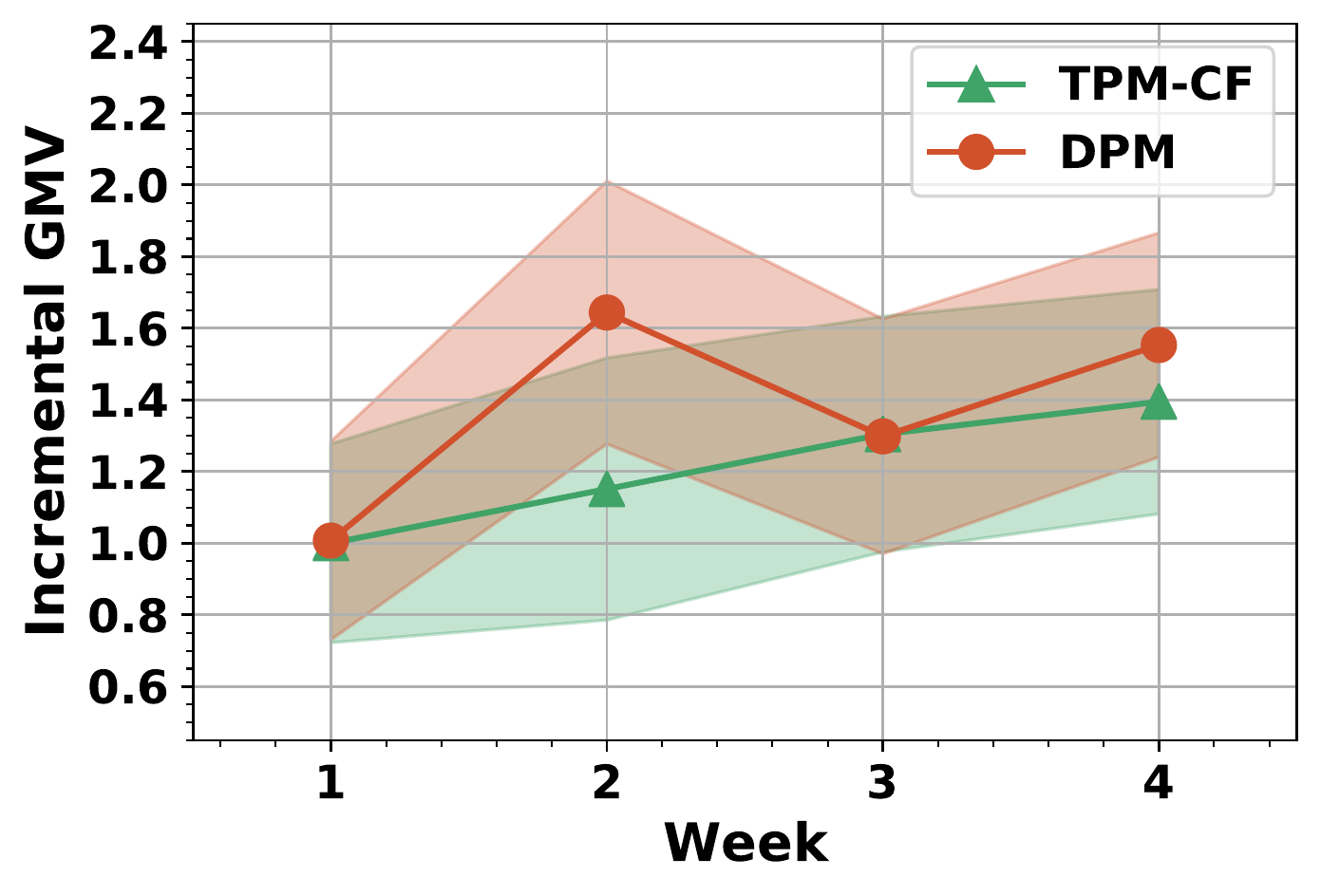}}
\caption{Evaluation Results}
\label{fig:evaluation}
\end{figure*}

\subsection{Offline Simulation}

\subsubsection{Dataset.} Two types of datasets are provided in this paper: an open real-world dataset and a marketing dataset collected from an online food delivery platform.

\begin{itemize}
    \item CRITEO-UPLIFT v2. This dataset is provided by the AdTech company Criteo in the AdKDD’18 workshop~\cite{Diemert2018}. The data is collected from a random control trial (RCT) that prevents a random part of users from being targeted by advertising. It contains 12 features, 1 binary treatment indicator and 2 response labels (visit/conversion). The (incremental) visit is regarded as the predictive objective for (cost-unaware) uplift modeling.  In order to compare the performance of different models to predict ROI of individuals, we take the visit label as the cost and the conversion label as the reward. The whole dataset contains 13.9 million samples, and is randomly partitioned into two parts for train (70\%) and test (30\%), respectively.
    \item Marketing data. Money Off is a common marketing campaign in Meituan, an online food delivery platform. We conduct a four-week RCT in this platform where online shops will offer a random discount every day. Notice that the discount of a shop is the same for all the users to prevent price discrimination but may randomly change in different days, and different shops may offer different discounts.  The data in the first two weeks is used for train and the others for test. The discount $T \in \{0, 1, \ldots, 6\}$ is taken as the treatment, where $T = t$ means $\$t$ cash off for each order whose price meets a given threshold. This dataset contains 75 features, 1 treatment label and 2 response labels (daily cost/orders). For the binary treatment assignment problem, we take the samples with $T = 0$ as the control group and the samples with $T > 0$ as the treatment group. For the budget allocation problem with multiple treatments, the budget refers to the whole cost of all the shops and different discounts represent different treatments. This dataset contains 4.1 million samples.
\end{itemize}

\subsubsection{Evaluation Metric.}

Multiple evaluation metrics are provided for offline evaluation in this experiment. 

\begin{itemize}
    \item AUUC (Area under Uplift Curve). A common metric~to evaluate uplift models \cite{Rzepakowski020Decision}. In this experiment, the auuc score is computed by using CausalML packages~\cite{chen2020causalml}.
    \item AUCC (Area under Cost Curve). A similar metric to AUUC, but designed for evaluating the performance to rank ROI of individuals~\cite{Du2019Improve}.
    \item MT-AUCC. It is proposed in this paper and used to evaluate the performance of models to rank marginal utilities of different individuals under different treatments.
    \item EOM (Expected Outcome Metric). Based on RCT data, the expected outcome (reward/cost) can be obtained for arbitrary policy by using the computing methods in~\cite{Ai2022LBCF,Zhao2017Uplift}.
\end{itemize}

\subsubsection{Benchmark.} 

For each problem considered in this paper, multiple different models/algorithms are implemented and taken as the benchmarks.

\begin{itemize}
    \item Cost-unaware binary treatment assignment problem
        \begin{itemize}
            \item S-Learner. A single model predicting the response of individuals with/without the treatment. The CATE is computed by $E(Y|X, T=1) - E(Y|X, T=0)$.
            \item X-Learner. A meta-learner approach proposed in \cite{Kunzel2019Metalearners}.
            \item Causal Forest. An uplift model proposed in \cite{Athey2019Generalized}. It is implemented here based on EconML packages~\cite{econml}.
            \item DUM. The direct uplift modeling method in this paper.
        \end{itemize}
    \item Cost-aware binary treatment assignment problem
        \begin{itemize}
            \item TPM-SL. The two-phase method which uses two S-Learner models to predict the incremental revenue and cost, respectively. Predict ROI of individuals by computing the ratio between these two models. 
            \item Direct Rank. Similar to our work, a loss function is designed for ranking ROI of individuals in this model~\cite{Du2019Improve}. However, we prove that it cannot achieve the correct rank when the loss converges in Appendix~E~\cite{Zhou2022DHCL}.
            \item DRP. The direct ROI prediction model in this paper.
        \end{itemize}
    \item Budget allocation problem with multiple treatments
        \begin{itemize}
            \item TPM-SL. The two-phase method mentioned in many existing works~\cite{Ai2022LBCF,Zhao2019Unified}. In the first stage, we use a S-Learner model to predict the response (reward/cost) of individuals under different treatments. In the second stage, the Lagrangian duality algorithm is developed to compute the approximately optimal solution.
            \item TPM-CF. Instead of S-Learner, we use Causal Forests to predict the incremental response. It is implemented based on generalized random forests (GRF) in EconML packages~\cite{econml}, which can also support multiple treatments.
            \item DPM. The approach in this paper that combines the direct prediction of marginal utilities and Algorithm~\ref{alg:MTBAP}.
        \end{itemize}
\end{itemize}

The hyperparameters in these algorithms are obtained based on grid search and each data point in the experimental results is computed by running the programs for 20 times.

\subsubsection{Experimental Results.} For the cost-unaware binary treatment assignment problem, Fig.~\ref{fig:evaluation-1}-\ref{fig:evaluation-2} presents the comparison of four uplift models. First of all, our model DUM performs best in both CRITEO-UPLIFT v2 and Marketing data. As the common baseline, the result of S-Learner is not too bad. It is near to our algorithm DUM in both two datasets. For comparison, X-Learner and Causal Forest is not always superior to S-Learner. The former is worse in CRITEO-UPLIFT v2 and the latter is inferior to S-Learner in Marketing data. The detailed results can be found in Table~1 in Appendix~F~\cite{Zhou2022DHCL}.

Due to the robustness of S-Learner, it is still taken as the base model to predict ROI of individuals. As is shown by Fig.~\ref{fig:evaluation-3}-\ref{fig:evaluation-4}, TPM-SL cannot perform well especially in Marketing data at this time. 
The incorrect loss function of Direct Rank causes that it cannot converge to a stable extreme point and is inferior to our model DRP.
In Appendix~E~\cite{Zhou2022DHCL}, we will present the detailed analysis for the convergence of Direct Rank. 
Compared with TPM-SL and Direct Rank, our model DRP always performs best and achieves significant improvement. 

Fig.~\ref{fig:evaluation-5}-\ref{fig:evaluation-6} shows the results of different models and algorithms to solve the budget allocation problem with multiple treatments. Since the tree-based uplift models were often used in many existing works~\cite{Ai2022LBCF,Zhao2017Uplift} to deal with this problem, we also take TPM-CF as the baseline. Our approach DPM significantly outperforms TPM-SL and TPM-CF in MT-AUCC, which indicates that DPM is better at ranking marginal utilities. We also use EOM to test the incremental reward of different approaches when given different budget in Fig.~\ref{fig:evaluation-6}. In order to protect the data privacy of this platform, the budget and reward have been normalized. In spite of this, it is still clear that our approach DPM can always help the platform to obtain much more reward under different budget.

As is shown by Table~1 in Appendix~F~\cite{Zhou2022DHCL}, all the models proposed in this paper are more stable and have lower variance than other existing works. This is because our models can make a direct prediction for the final objective, and always converge to a stable extreme point.

\subsection{Online A/B Test}

\subsubsection{Setups.} We deploy our algorithm (DPM) to support the Money Off campaign in Meituan (a food delivery platform), and conduct an online AB test for four weeks. There are 310k total shops in this experiment and they are randomly partitioned into three groups, named G-DPM, G-TPM and G-Control respectively. 
The discount $T \in \{0,1,\ldots,6\}$ is taken as the treatment and assigning a shop with treatment $T = t$ means $\$t$ cash off for each order whose price meets a given threshold. Given a limited budget, the objective is to decide the discount every day for each shop so as to maximize the total number of orders and GMV (Gross Merchandise Value) in this platform. Algorithm DPM and TPM-CF are deployed in the experiment groups named G-DPM and G-TPM, respectively. The group G-Control is taken as the control group and does not offer any discount. These groups will be randomly broken up every week. Therefore, the AB experiment is repeated four times and the period of each time is one week.

\subsubsection{Results.} Fig.~\ref{fig:evaluation-5}-\ref{fig:evaluation-6} shows the incremental orders and GMV relative to G-Control in each week. To protect data privacy, all the data points have been normalized that are divided by the incremental orders or GMV of TPM-SL in the first week. The shadow area in Fig.~\ref{fig:evaluation-5}-\ref{fig:evaluation-6} represents the confidence interval with a confidence level of 0.95, which is calculated by student’s t-test. Compared with TPM-CF, our approach DPM always performs better in incremental orders and is not inferior to it in incremental GMV in each week. To sum up, DPM achieves a significant growth by 14.3\% in incremental orders and 13.6\% in GMV on average.

\section{Conclusion}


In this paper, we proposed a novel approach for solving resource allocation problems based on the decision factor. Taking it as the learning objective can avoid alternative mathematical operations performed on the prediction results. 
This idea was applied to solve two crucial problems in marketing and presented great advantages both theoretically and practically. 
Large-scale offline simulations and online AB tests validated the effectiveness of our approach. 

Our future work will focus on the application of this approach in more complex marketing scenarios. For example, multiple marketing campaigns may be conducted at the same time and interact with each other. Therefore, deriving the decision factor and conducting direct heterogeneous causal learning in this situation are more challenging.




\bibliography{aaai23}

\begin{thebibliography}{31}
\providecommand{\natexlab}[1]{#1}

\bibitem[{Ai et~al.(2022)Ai, Li, Gong, Yu, Xue, Zhang, Zhang, and
  Jiang}]{Ai2022LBCF}
Ai, M.; Li, B.; Gong, H.; Yu, Q.; Xue, S.; Zhang, Y.; Zhang, Y.; and Jiang, P.
  2022.
\newblock {LBCF:} {A} Large-Scale Budget-Constrained Causal Forest Algorithm.
\newblock In \emph{The {ACM} Web Conference (WWW)}, 2310--2319.

\bibitem[{Athey, Tibshirani, and Wager(2019)}]{Athey2019Generalized}
Athey, S.; Tibshirani, J.; and Wager, S. 2019.
\newblock Generalized Random Forests.
\newblock \emph{The Annals of Statistics}, 47(2): 1148--1178.

\bibitem[{Athey and Wager(2021)}]{Athey2021Policy}
Athey, S.; and Wager, S. 2021.
\newblock Policy Learning with Observational Data.
\newblock \emph{Econometrica}, 89(1): 133--161.

\bibitem[{Betlei, Diemert, and Amini(2021)}]{Betlei2021Uplift}
Betlei, A.; Diemert, E.; and Amini, M.-R. 2021.
\newblock Uplift Modeling with Generalization Guarantees.
\newblock In \emph{the ACM SIGKDD Conference on Knowledge Discovery \& Data
  Mining (KDD)}, 55--65.

\bibitem[{Chen et~al.(2020)Chen, Harinen, Lee, Yung, and
  Zhao}]{chen2020causalml}
Chen, H.; Harinen, T.; Lee, J.-Y.; Yung, M.; and Zhao, Z. 2020.
\newblock CausalML: Python Package for Causal Machine Learning.
\newblock arXiv:2002.11631.

\bibitem[{{Diemert Eustache, Betlei Artem}, Renaudin, and
  Massih-Reza(2018)}]{Diemert2018}
{Diemert Eustache, Betlei Artem}; Renaudin, C.; and Massih-Reza, A. 2018.
\newblock A Large Scale Benchmark for Uplift Modeling.
\newblock In \emph{the ACM AdKDD and TargetAd Workshop}. ACM.

\bibitem[{Donti, Amos, and Kolter(2017)}]{Donti2017task}
Donti, P.; Amos, B.; and Kolter, J.~Z. 2017.
\newblock Task-{B}ased {E}nd-to-{E}nd {M}odel {L}earning in {S}tochastic
  {O}ptimization.
\newblock \emph{Advances in Neural Information Processing Systems ({NIPS})},
  30.

\bibitem[{Du, Lee, and Ghaffarizadeh(2019)}]{Du2019Improve}
Du, S.; Lee, J.; and Ghaffarizadeh, F. 2019.
\newblock Improve User Retention with Causal Learning.
\newblock In \emph{the {ACM} {SIGKDD} Workshop on Causal Discovery}, volume
  104, 34--49. {PMLR}.

\bibitem[{Elmachtoub and Grigas(2022)}]{Adam2022SPO}
Elmachtoub, A.~N.; and Grigas, P. 2022.
\newblock Smart "Predict, then Optimize".
\newblock \emph{Management Science}, 68(1): 9--26.

\bibitem[{Hao et~al.(2020)Hao, Peng, Ma, Wang, Jin, Hao, Chen, Bai, Xie, Xu
  et~al.}]{Hao2020Dynamic}
Hao, X.; Peng, Z.; Ma, Y.; Wang, G.; Jin, J.; Hao, J.; Chen, S.; Bai, R.; Xie,
  M.; Xu, M.; et~al. 2020.
\newblock Dynamic Knapsack Optimization towards Efficient Multi-Channel
  Sequential Advertising.
\newblock In \emph{International Conference on Machine Learning (ICML)},
  4060--4070. PMLR.

\bibitem[{Hua et~al.(2021)Hua, Yan, Xu, and Yang}]{Hua2021Markdowns}
Hua, J.; Yan, L.; Xu, H.; and Yang, C. 2021.
\newblock Markdowns in E-Commerce Fresh Retail: {A} Counterfactual Prediction
  and Multi-Period Optimization Approach.
\newblock In \emph{The {ACM} {SIGKDD} Conference on Knowledge Discovery and
  Data Mining ({KDD})}, 3022--3031.

\bibitem[{Johansson, Shalit, and Sontag(2016)}]{Johansson2016Learning}
Johansson, F.; Shalit, U.; and Sontag, D. 2016.
\newblock Learning Representations for Counterfactual Inference.
\newblock In \emph{International Conference on Machine Learning (ICML)},
  3020--3029. PMLR.

\bibitem[{Keith~Battocchi(2019)}]{econml}
Keith~Battocchi, M. H. G. L. P. O. M. O. V.~S., Eleanor~Dillon. 2019.
\newblock {EconML}: {A Python Package for ML-Based Heterogeneous Treatment
  Effects Estimation}.
\newblock https://github.com/microsoft/EconML.
\newblock Version 0.x.

\bibitem[{K{\"u}nzel et~al.(2019)K{\"u}nzel, Sekhon, Bickel, and
  Yu}]{Kunzel2019Metalearners}
K{\"u}nzel, S.~R.; Sekhon, J.~S.; Bickel, P.~J.; and Yu, B. 2019.
\newblock Metalearners for Estimating Heterogeneous Treatment Effects using
  Machine Learning.
\newblock \emph{The National Academy of Sciences}, 116(10): 4156--4165.

\bibitem[{Kuusisto et~al.(2014)Kuusisto, Costa, Nassif, Burnside, Page, and
  Shavlik}]{Kuusisto2014Support}
Kuusisto, F.; Costa, V.~S.; Nassif, H.; Burnside, E.; Page, D.; and Shavlik, J.
  2014.
\newblock Support Vector Machines for Differential Prediction.
\newblock In \emph{Joint European Conference on Machine Learning and Knowledge
  Discovery in Databases}, 50--65. Springer.

\bibitem[{Mandi et~al.(2022)Mandi, Bucarey, Tchomba, and
  Guns}]{Mandi2022Decision}
Mandi, J.; Bucarey, V.; Tchomba, M. M.~K.; and Guns, T. 2022.
\newblock Decision-Focused Learning: Through the Lens of Learning to Rank.
\newblock In \emph{International Conference on Machine Learning ({ICML})},
  14935--14947. PMLR.

\bibitem[{Nie and Wager(2021)}]{Nie2021Quasi}
Nie, X.; and Wager, S. 2021.
\newblock Quasi-oracle Estimation of Heterogeneous Treatment Effects.
\newblock \emph{Biometrika}, 108(2): 299--319.

\bibitem[{Polleit(2011)}]{polleit2011can}
Polleit, T. 2011.
\newblock What Can the Law of Diminishing Marginal Utility Teach Us.
\newblock \emph{Mises Institute}.

\bibitem[{Rzepakowski and Jaroszewicz(2010)}]{Rzepakowski020Decision}
Rzepakowski, P.; and Jaroszewicz, S. 2010.
\newblock Decision Trees for Uplift Modeling.
\newblock In \emph{The {IEEE} International Conference on Data Mining (ICDM)},
  441--450. IEEE.

\bibitem[{Sekhon(2008)}]{sekhon2008neyman}
Sekhon, J.~S. 2008.
\newblock The Neyman-Rubin Model of Causal Inference and Estimation via
  Matching Methods.
\newblock \emph{The Oxford Handbook of Political Methodology}, 2: 1--32.

\bibitem[{Shah et~al.(2022)Shah, Wang, Wilder, Perrault, and
  Tambe}]{Shah2022LODL}
Shah, S.; Wang, K.; Wilder, B.; Perrault, A.; and Tambe, M. 2022.
\newblock Decision-Focused Learning without Decision-Making: Learning Locally
  Optimized Decision Losses.
\newblock In \emph{Advances in Neural Information Processing Systems ({NIPS})}.

\bibitem[{Shalit, Johansson, and Sontag(2017)}]{Shalit2017Estimating}
Shalit, U.; Johansson, F.~D.; and Sontag, D. 2017.
\newblock Estimating Individual Treatment Effect: Generalization Bounds and
  Algorithms.
\newblock In \emph{International Conference on Machine Learning (ICML)},
  3076--3085. PMLR.

\bibitem[{Wager and Athey(2018)}]{Wager2018Estimation}
Wager, S.; and Athey, S. 2018.
\newblock Estimation and Inference of Heterogeneous Treatment Effects using
  Random Forests.
\newblock \emph{Journal of the American Statistical Association}, 113(523):
  1228--1242.

\bibitem[{Wilder, Dilkina, and Tambe(2019)}]{Bryan2019DFL}
Wilder, B.; Dilkina, B.; and Tambe, M. 2019.
\newblock Melding the Data-Decisions Pipeline: Decision-Focused Learning for
  Combinatorial Optimization.
\newblock In \emph{The {AAAI} Conference on Artificial Intelligence {(AAAI)}},
  1658--1665. {AAAI} Press.

\bibitem[{Xiao et~al.(2019)Xiao, Guo, Jiang, Lv, Chen, Zhu, and
  Yang}]{Xiao2019Model}
Xiao, S.; Guo, L.; Jiang, Z.; Lv, L.; Chen, Y.; Zhu, J.; and Yang, S. 2019.
\newblock Model-based Constrained MDP for Budget Allocation in Sequential
  Incentive Marketing.
\newblock In \emph{the ACM International Conference on Information and
  Knowledge Management (CIKM)}, 971--980.

\bibitem[{Yao et~al.(2018)Yao, Li, Li, Huai, Gao, and
  Zhang}]{Yao2018Representation}
Yao, L.; Li, S.; Li, Y.; Huai, M.; Gao, J.; and Zhang, A. 2018.
\newblock Representation Learning for Treatment Effect Estimation from
  Observational Data.
\newblock \emph{Advances in Neural Information Processing Systems (NIPS)}, 31.

\bibitem[{Zhang et~al.(2021)Zhang, Tang, Yang, An, Tang, Xi, LI, and
  Xiong}]{Zhang2021BCORLE}
Zhang, Y.; Tang, B.; Yang, Q.; An, D.; Tang, H.; Xi, C.; LI, X.; and Xiong, F.
  2021.
\newblock BCORLE({\(\lambda\)}): An Offline Reinforcement Learning and
  Evaluation Framework for Coupons Allocation in E-commerce Market.
\newblock In \emph{Annual Conference on Neural Information Processing Systems
  (NIPS)}, 20410--20422.

\bibitem[{Zhao et~al.(2019)Zhao, Hua, Yan, Zhang, Xu, and
  Yang}]{Zhao2019Unified}
Zhao, K.; Hua, J.; Yan, L.; Zhang, Q.; Xu, H.; and Yang, C. 2019.
\newblock A Unified Framework for Marketing Budget Allocation.
\newblock In \emph{the ACM SIGKDD International Conference on Knowledge
  Discovery \& Data Mining (KDD)}, 1820--1830.

\bibitem[{Zhao, Fang, and Simchi-Levi(2017)}]{Zhao2017Uplift}
Zhao, Y.; Fang, X.; and Simchi-Levi, D. 2017.
\newblock Uplift Modeling with Multiple Treatments and General Response Types.
\newblock In \emph{International Conference on Data Mining (ICDM)}, 588--596.
  SIAM.

\bibitem[{{Zhou} et~al.(2022){Zhou}, {Li}, {Jiang}, {Zheng}, and
  {Wang}}]{Zhou2022DHCL}
{Zhou}, H.; {Li}, S.; {Jiang}, G.; {Zheng}, J.; and {Wang}, D. 2022.
\newblock {Direct Heterogeneous Causal Learning for Resource Allocation
  Problems in Marketing}.
\newblock \emph{arXiv preprint}, arXiv:2211.15728.

\bibitem[{Zhou, Athey, and Wager(2022)}]{Zhou2022Offline}
Zhou, Z.; Athey, S.; and Wager, S. 2022.
\newblock Offline Multi-Action Policy Learning: Generalization and
  Optimization.
\newblock \emph{Operations Research}.

\end{thebibliography}

\appendix

\setcounter{theorem}{0}

\section{Appendix A. Proof of Theorem~\ref{theorem:DUM}}

\begin{theorem} 
When the loss function~(2) converges, $s_i$ can be used to rank $\tau^r(x_i)$ and $q_i = \frac{\tau^r(x_i)}{\sum_i \tau^r(x_i)}$ can be used to obtain an unbiased estimation of $\tau^r(x_i)$. 
\end{theorem}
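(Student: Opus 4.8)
The plan is to characterize the minimizer of the loss~\eqref{DUM} through its first-order stationarity condition and then simply read off the value of $q_i$ there. First I would eliminate the softmax by writing $\ln q_i = s_i - \ln Z$ with $Z = \sum_j e^{s_j}$, so that $L$ becomes an explicit function of the scores $s=(s_1,\dots,s_N)$. Using $\partial \ln q_i/\partial s_k = \delta_{ik} - q_k$ and defining the empirical ATE $C := \frac{1}{N_1}\sum_{i|t_i=1} y^r_i - \frac{1}{N_0}\sum_{i|t_i=0} y^r_i$, a short differentiation yields
\[ \frac{\partial L}{\partial s_k} = -\tfrac{1}{N_1}\,y^r_k\,\mathbf{1}\{t_k=1\} + \tfrac{1}{N_0}\,y^r_k\,\mathbf{1}\{t_k=0\} + q_k\,C . \]
As a sanity check I would verify $\sum_k \partial L/\partial s_k = 0$, which reflects the gauge freedom $s_i \mapsto s_i + c$ of the softmax and confirms that only $N-1$ of the stationarity equations are independent.

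The crucial structural step is that $s_i = \hbar(x_i)$ depends on the feature alone, so the scores (and hence $q$) are tied together across samples sharing a feature value. I would therefore group samples by $x$ and set the aggregated gradient $\sum_{k:x_k=x}\partial L/\partial s_k$ to zero. Taking conditional expectations and invoking the RCT assumption $T \perp X$ --- so that the treated/control frequencies balance, $n_1(x)/N_1 \approx n(x)/N \approx n_0(x)/N_0$ --- the two data terms collapse to $-\tfrac{n(x)}{N}\mu_1(x)$ and $+\tfrac{n(x)}{N}\mu_0(x)$, whose difference is $-\tfrac{n(x)}{N}\tau^r(x)$ with $\mu_1,\mu_0$ the potential-outcome means. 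The stationary condition then reads $q(x)\,C = \tau^r(x)/N$, i.e.
\[ q_i = \frac{\tau^r(x_i)}{N\,C}. \]

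Finally I would confirm consistency and pin down the normalizing constant. Summing over all samples gives $\sum_i q_i = \tfrac{1}{NC}\sum_i \tau^r(x_i)$, and since $C$ is the empirical ATE it equals $\tfrac{1}{N}\sum_i \tau^r(x_i)$ in expectation; hence $\sum_i q_i = 1$ holds automatically and $NC = \sum_j \tau^r(x_j)$. This delivers
\[ q_i = \frac{\tau^r(x_i)}{\sum_{j}\tau^r(x_j)}, \]
the claimed unbiased estimate, from which $\tau^r(x_i)$ is recovered up to the common factor $\sum_j \tau^r(x_j)$. Because the softmax is strictly increasing in $s_i$ and this factor is shared across all individuals, ordering by $s_i$ (equivalently by $q_i$) reproduces the ordering of $\tau^r(x_i)$, which gives the ranking claim.

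I expect the main obstacle to be the grouping/expectation step, not the algebra. The raw per-sample stationarity equation equates $q_k$ to a single noisy outcome $y^r_k$, so the unbiasedness genuinely emerges only after aggregating over samples with the same feature and using the RCT balance to convert empirical averages into $\mu_1(x),\mu_0(x)$. Making this precise --- reconciling the finite-sample stationary point with its population limit, and treating the softmax normalization over a growing sample --- is the delicate part; the gradient computation and the normalization check are routine by comparison.
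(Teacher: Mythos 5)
Your proposal is correct and follows essentially the same route as the paper's proof: the paper first uses SUTVA and the RCT assumption to rewrite the loss as $-\frac{1}{N}\sum_i \tau^r(x_i)\ln q_i$ and then sets $\partial L/\partial s_j = 0$ to obtain $q_j = \tau^r(x_j)/\sum_i \tau^r(x_i)$, whereas you differentiate the empirical loss first and convert the gradient's treated/control terms via the RCT balance afterwards --- the two orderings are equivalent. Your closing caveat about reconciling the finite-sample stationary point with its population limit is precisely the step the paper also glosses over by identifying sample averages with expectations.
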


\begin{proof}
Denote $q_i$ by $\kappa(x_i)$ in the loss function~(2). 
Therefore, we have 
\begin{align*}
&\frac{1}{N_1}\sum_{i|t_i=1} y^r_i \ln q_i \\
=& E[Y^r_i \ln \kappa(X_i) | T_i = 1]\\
=& E[Y^r_i(1) \ln \kappa(X_i) | T_i = 1] \ \ \ \ (\text{SUTVA}\footnotemark)\\
=& E[Y^r_i(1) \ln \kappa(X_i)] \ \ \ \ ((Y^r(0), Y^r(1)) \perp T).
\end{align*}
The third equation holds based on the property of random control trials (RCT).
Hence, $L(s)$ can be rewritten as the following.
\begin{align*}
L(s) =& -(E[Y^r_i(1) \ln \kappa(X_i)] - E[Y_i(0) \ln \kappa(X_i)]) \\
=& -E[(Y^r_i(1) - Y^r_i(0)) \ln \kappa(X_i)] \\
=& - E_X[E[(Y^r(1)-Y^r(0)) \ln \kappa(X) | X]] \\
=& - E_X[E[\tau^r(X) \ln \kappa(X) | X]] \\
=& - E[\tau^r(X_i) \ln \kappa(X_i)] \\ 
=& - \frac{1}{N} \sum_i \tau^r(x_i) \ln q_i.
\end{align*}
When the loss converges, we have
\begin{align*}
\frac{\partial L(s)}{\partial s_j} =& - \frac{1}{N} \sum_i \frac{\tau^r(x_i)}{q_i} \frac{\partial q_i}{\partial s_j} \\
=& - \frac{1}{N} [\tau^r(x_j) - q_j \sum_i \tau^r(x_i)] = 0.
\end{align*}
It implies that $q_j = \frac{\tau^r(x_j)}{\sum_i \tau^r(x_i)}$ holds for $\forall j$. According to the loss function~(2), $q_i < q_j$ can be derived from $s_i < s_j$, where the former means $\tau^r(x_i) < \tau^r(x_j)$. 

\footnotetext{This is an important assumption in causal inference~\cite{sekhon2008neyman}. It requires that the potential outcome on one unit should be unaffected by the particular assignment of treatments to the other units.}

\end{proof}

\section{Appendix B. Proof of Theorem~\ref{theorem:DRP}}

\begin{theorem} 
The loss function~(3) is convex , and $s_i$ can be used to rank ROI and $q_i = \frac{\tau^r(x_i)}{\tau^c(x_i)}$ is an unbiased estimation of ROI of individual $i$ when the loss converges. 
\end{theorem}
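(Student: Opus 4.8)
The plan is to mirror the argument used for Theorem~\ref{theorem:DUM}, first converting the empirical objective into an expectation over potential outcomes and only then reading off convexity, the stationary point, and the ranking property. Exactly as in Appendix~A, I would use SUTVA together with the RCT independence $(Y^r(0),Y^r(1),Y^c(0),Y^c(1)) \perp T$ to replace each group-wise empirical average by an expectation: the treatment sum becomes $E[Y^r(1)\ln\frac{q}{1-q} + Y^c(1)\ln(1-q)]$ and the control sum the analogous expression with the $(0)$ potential outcomes, where $q = \sigma(\hbar(X))$. Subtracting the two and conditioning on $X$ collapses the potential-outcome differences into the CATEs, giving the effective objective $L(s) = -\frac{1}{N}\sum_i\big[\tau^r(x_i)\ln\frac{q_i}{1-q_i} + \tau^c(x_i)\ln(1-q_i)\big]$.

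Next I would establish convexity on this rewritten form. Substituting $q_i = \sigma(s_i)$ and using the logit identities $\ln\frac{q_i}{1-q_i} = s_i$ and $\ln(1-q_i) = -\ln(1+e^{s_i})$, each summand becomes $-\tau^r(x_i)s_i + \tau^c(x_i)\ln(1+e^{s_i})$. Its second derivative in $s_i$ is $\tau^c(x_i)\,\sigma(s_i)(1-\sigma(s_i))$, which is strictly positive because $\tau^c(x_i) > 0$; since distinct $s_i$ are decoupled, the Hessian is diagonal and positive definite, so $L$ is convex.

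For unbiasedness and ranking I would set the gradient to zero. From the per-sample derivative $\frac{\partial L}{\partial s_j} = \frac{1}{N}\big(-\tau^r(x_j) + \tau^c(x_j)\sigma(s_j)\big)$, the stationarity condition gives $q_j = \sigma(s_j) = \tau^r(x_j)/\tau^c(x_j)$, i.e.\ $q_j$ is exactly the ROI, and strict convexity makes this the unique minimizer. Because $\sigma$ is strictly increasing (and the scaled ROI lies in $(0,1)$ so the logit is well defined), $s_i < s_j \iff q_i < q_j \iff \mathrm{ROI}_i < \mathrm{ROI}_j$, which yields the ranking claim.

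The main obstacle is the convexity statement itself: the raw finite-sample loss~(3) is \emph{not} convex in $s$, since each treatment sample contributes a convex term in $s_i$ while each control sample contributes a concave one. The resolution --- and the crux of the proof --- is that the RCT rewriting recombines these opposite-signed pieces into the single CATE-weighted objective above, in which convexity is manifest precisely because $\tau^r,\tau^c>0$. I would therefore be careful to state that convexity (and hence uniqueness of the converged estimate) is meant in this population/effective sense, and to flag the positivity of the treatment effects as the assumption that both keeps the logistic terms well defined on $(0,1)$ and guarantees the positive curvature.
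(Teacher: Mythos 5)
Your proposal is correct and follows essentially the same route as the paper's Appendix~B proof: rewrite the loss via SUTVA and RCT independence into the CATE-weighted form $-\frac{1}{N}\sum_i[\tau^r(x_i)\ln\frac{q_i}{1-q_i}+\tau^c(x_i)\ln(1-q_i)]$, verify the diagonal Hessian has entries $\tau^c(x_i)q_i(1-q_i)/N>0$, and read off $q_i=\tau^r(x_i)/\tau^c(x_i)$ from stationarity with the ranking claim following from monotonicity of $\sigma$. Your closing remark --- that convexity holds only for the rewritten population objective, not termwise for the raw finite-sample loss --- is a clarification the paper leaves implicit, but it does not change the argument.
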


\begin{proof}
Similarly to the proof of Theorem~\ref{theorem:DUM}, the loss function~(3) can be rewritten as the following equation. 
\begin{align*}
L(s) &= - [\frac{1}{N_1}\sum_{i|t_i=1} y^r_i \ln \frac{q_i}{1-q_i} -  \frac{1}{N_0}\sum_{i|t_i=0} y^r_i \ln \frac{q_i}{1-q_i}]  - \\ &\ \ \ \ \  [\frac{1}{N_1}\sum_{i|t_i=1} y^c_i \ln (1 - q_i) -  \frac{1}{N_0}\sum_{i|t_i=0} y^c_i \ln (1 - q_i)] \\
&= - \frac{1}{N}\sum_{i} [\tau^r(x_i) \ln \frac{q_i}{1-q_i} + \tau^c(x_i) \ln (1 - q_i)].
\end{align*}
Therefore, we have
\begin{align*}
\frac{\partial L(s)}{\partial s_i} &= - \frac{1}{N} (\frac{\tau^r(x_i)}{q_i(1-q_i)} - \frac{\tau^c(x_i)}{1 - q_i}) \frac{\partial q_i}{\partial s_i} \\
&= - \frac{1}{N} (\tau^r(x_i) - q_i \tau^c(x_i))
\end{align*}
and
\begin{align*}
\frac{\partial^2 L(s)}{{\partial s_i}^2} &= \frac{\tau^c(x_i)}{N}  \frac{\partial q_i}{\partial s_i} \\
&= \frac{q_i (1 -q_i) \tau^c(x_i)}{N}.
\end{align*}
Due to $q_i \in (0,1)$ and $\tau^c(x_i) > 0$, we get $\frac{\partial^2 L(s)}{{\partial s_i}^2} > 0$. 
In addition, it is easy to verify that $\frac{\partial^2 L(s)}{\partial s_i \partial s_j} = 0$ also holds. 
Based on the above, the loss function $L(s)$ is convex. When the loss converges, we can get $\frac{\partial L(s)}{\partial s_i} = 0$ for $\forall i$, which induces that $q_i = \frac{\tau^r(x_i)}{\tau^c(x_i)}$ holds. Since $\sigma(\cdot)$ is a monotone increasing function, the sort of $s_i$ can represent the rank of ROI of individual $i$.

\end{proof}

\section{Appendix C. Proof of Theorem~\ref{theorem:MTBAP-equivalent}}

\begin{theorem} 
Algorithm~2 is equivalent to the Lagrangian duality method.
\end{theorem}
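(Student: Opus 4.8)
The plan is to exploit the fact that Algorithm~\ref{alg:MTBAP} and the Lagrangian duality method run the \emph{same} bisection on the multiplier: identical initialization ($\alpha_l=0$, $\alpha_r=\max_{ij}\ell_{ij}$), identical update of $[\alpha_l,\alpha_r]$ according to the sign of $B-\sum_{ij}z_{ij}c_{ij}$, and identical termination. The only place the two procedures differ is the inner step that, for a fixed trial value $\alpha$, assigns each individual~$i$ a treatment $j^*$. Hence it suffices to prove that for every $i$ and every $\alpha$ the index chosen by the rule $\ell_{ij^*}\le\alpha<\ell_{ij^*-1}$ (line~8) is a maximizer of $r_{ij}-\alpha c_{ij}$, i.e.\ agrees with Eq.~\eqref{eq:dual-solution}. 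Once the per-$\alpha$ assignments coincide, the consumed budget $\sum_{ij}z_{ij}c_{ij}$ coincides at every iteration, so both searches branch identically and, by induction on the bisection steps, return the same $\alpha^*$ and the same $z_{ij}$.

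To establish the inner equivalence I would fix $i$, write $g_j(\alpha)=r_{ij}-\alpha c_{ij}$, and compute the consecutive difference
\[
g_{j+1}(\alpha)-g_j(\alpha)=(c_{ij+1}-c_{ij})\,(\ell_{ij}-\alpha),
\]
which is immediate from the definition of $\ell_{ij}$ in Assumption~\ref{assumption:law}. Because the treatment levels are ordered with strictly increasing cost, $c_{ij+1}-c_{ij}>0$, so the sign of $g_{j+1}(\alpha)-g_j(\alpha)$ equals the sign of $\ell_{ij}-\alpha$: stepping up one level raises the dual objective exactly when $\ell_{ij}>\alpha$ and lowers it when $\ell_{ij}<\alpha$.

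Next I would invoke the Law of Diminishing Marginal Utility (Assumption~\ref{assumption:law}): since $\ell_{ij}$ is non-increasing in $j$, the quantity $\ell_{ij}-\alpha$ changes sign at most once, passing from positive to negative as $j$ grows. Consequently $g_j(\alpha)$ first increases and then decreases in $j$ (it is discretely concave, hence unimodal), and its maximizer is precisely the switch index, characterized by $\ell_{ij^*}\le\alpha\le\ell_{ij^*-1}$. The boundary conventions $\ell_{i0}=+\infty$ and $\ell_{iL}=0$ used in lines~7--8 simply extend this characterization to the corner solutions $j^*=1$ (when $\alpha\ge\ell_{i1}$) and $j^*=L$ (when $\alpha<\ell_{iL-1}$). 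This is exactly the sandwich inequality asserted just before the algorithm, so the selected $j^*$ maximizes $r_{ij}-\alpha c_{ij}$.

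The one delicate point, which I expect to be the main obstacle, is tie-breaking at a breakpoint $\alpha=\ell_{ij}$: there two adjacent levels attain the same dual value, and both the $\arg\max$ in Eq.~\eqref{eq:dual-solution} and the sandwich condition admit more than one index. The strict right-hand inequality in line~8 resolves this by committing to a single maximizer; since any maximizer yields the same objective value, the resulting $z_{ij}$ is still a valid dual-optimal assignment and the consumed budget is unaffected, so the bisection trajectory is unchanged. Combining this inner equivalence with the shared bisection skeleton then yields the full equivalence of the two methods.
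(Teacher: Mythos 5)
Your proposal is correct and follows essentially the same route as the paper's proof: both reduce the claim to showing that, for each fixed $\alpha$ and each $i$, the sandwich rule $\ell_{ij^*}\le\alpha\le\ell_{ij^*-1}$ selects a maximizer of $r_{ij}-\alpha c_{ij}$, and both rest on the identity $(r_{ij+1}-\alpha c_{ij+1})-(r_{ij}-\alpha c_{ij})=(c_{ij+1}-c_{ij})(\ell_{ij}-\alpha)$ together with the monotonicity of $\ell_{ij}$ from Assumption~\ref{assumption:law} (you phrase it as unimodality of $j\mapsto r_{ij}-\alpha c_{ij}$, the paper as two separate implications). Your additional remarks on the shared bisection skeleton and on tie-breaking at $\alpha=\ell_{ij}$ are points the paper leaves implicit, and they are handled correctly.
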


\begin{proof}
In order to prove the equivalence, we only need to prove that the following properties hold.
\begin{enumerate}[Property 1]
\item For $\forall i$, if $j^* = \arg \max_j r_{ij} - \alpha c_{ij}$ holds, then $\ell_{ij^*} \le \alpha \le \ell_{ij^*-1}$ holds.
\item For $\forall i$, if $\ell_{ij^*} \le \alpha \le \ell_{ij^*-1}$ holds, then $j^* = \arg \max_j r_{ij} - \alpha c_{ij}$ holds.
\end{enumerate}

For Property~1, if $j^* = \arg \max_j r_{ij} - \alpha c_{ij}$ holds, the following equations also holds.
\begin{align*}
r_{ij^*-1} - \alpha c_{ij^*-1} &\le r_{ij^*} - \alpha c_{ij^*}, \\
r_{ij^*+1} - \alpha c_{ij^*+1} &\le r_{ij^*} - \alpha c_{ij^*}.
\end{align*}
Because of $c_{ij^*-1} \le c_{ij^*} \le c_{ij^*+1}$ (the treatment effect is larger if the level of the treatment is higher), the first equation is equivalent to $\alpha \le \ell_{ij^*-1}$ and the second equation means $\alpha \ge \ell_{ij^*}$.

For Property~2, based on Assumption~1 (the Law of Diminishing Marginal Utility)~\cite{polleit2011can}, if $\ell_{ij^*} \le \alpha \le \ell_{ij^*-1}$ holds, the following equations also holds.
\begin{align*}
&\forall j \le j^* - 1, \ \ \alpha \le \ell_{ij^*-1} \le \ell_{ij}, \\
&\forall j \ge j^*, \ \ \alpha \ge \ell_{ij^*} \ge \ell_{ij}. 
\end{align*}
Because of $c_{ij^*-1} \le c_{ij^*} \le c_{ij^*+1}$, the first equation is equivalent to
$$\forall j \le j^* - 1, \ \ r_{ij} - \alpha c_{ij} \le r_{ij^*} - \alpha c_{ij^*}.$$
The second equation means
$$\forall j \ge j^*, \ \ r_{ij} - \alpha c_{ij} \le r_{ij^*} - \alpha c_{ij^*}.$$
Therefore, we have that $j^* = \arg \max_j r_{ij} - \alpha c_{ij}$ holds.
\end{proof}

\section{Appendix D. Proof of Theorem~\ref{theorem:MTBAP-loss}}

\begin{theorem} 
When the loss function~(7) converges, $s_{ij}$ can be used to rank $\ell_{ij}$ and $q_{ij}$ can be used to obtain the unbiased estimation of $\ell_{ij}$. 
\end{theorem}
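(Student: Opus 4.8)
The plan is to follow the same template as the proofs of Theorems~\ref{theorem:DUM} and~\ref{theorem:DRP}: first rewrite the empirical loss as a population expectation over potential outcomes using the RCT design, then differentiate and read off the stationary point. I would begin by invoking SUTVA together with the randomization guarantee $T \perp (Y^r(1),\dots,Y^r(L),Y^c(1),\dots,Y^c(L))$ and $T\perp X$, so that for any covariate function $h$ I can identify $\frac{1}{N_j}\sum_{i\mid t_i=j} h(x_i)y_i^{*}$ with the conditional mean $E[h(X)Y^{*}\mid T=j]$, which by randomization equals the unconditional $E[h(X)Y^{*}(j)]$ for $*\in\{r,c\}$. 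Applying this to each inner sum in~\eqref{MTBAP-loss} turns the two double sums into $\sum_{j>1} E_X[q_{j-1}(X)r(X,j)-q_{j-1}(X)^2c(X,j)]$ and $\sum_{j<L} E_X[q_j(X)r(X,j)-q_j(X)^2c(X,j)]$, where $q_j(X)=\sigma(\hbar(X,j))$ and $r(X,j),c(X,j)$ are the conditional potential revenue and cost.

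The key step is a reindexing that exposes the marginal utility. Shifting $j\mapsto j-1$ in the first sum makes both sums range over $j=1,\dots,L-1$, and collecting the coefficient of each $q_{ij}$ shows that $q_{ij}$ appears exactly twice — once through treatment-$j$ samples (the second sum) and once through treatment-$(j{+}1)$ samples (the first sum). Combining these two appearances telescopes the revenue and cost into their increments, giving
\begin{align*}
L(s) = -\frac{1}{N}\sum_i\sum_{j=1}^{L-1}\bigl(q_{ij}(r_{ij+1}-r_{ij}) - q_{ij}^2(c_{ij+1}-c_{ij})\bigr).
\end{align*}
Crucially this objective is separable across the pairs $(i,j)$, so each $q_{ij}$ can be optimized independently and all cross second-partials vanish, exactly as in the proof of Theorem~\ref{theorem:DRP}.

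With the loss in this form the conclusion follows by a one-variable argument. Differentiating and using $\partial q_{ij}/\partial s_{ij}=q_{ij}(1-q_{ij})>0$ gives $\partial L/\partial s_{ij} = -\tfrac1N\,q_{ij}(1-q_{ij})\bigl((r_{ij+1}-r_{ij})-2q_{ij}(c_{ij+1}-c_{ij})\bigr)$. Because $q_{ij}(1-q_{ij})>0$, every stationary point satisfies $q_{ij}=\tfrac12\ell_{ij}$; and since the per-coordinate objective is strictly convex in $q_{ij}$ (its second derivative in $q_{ij}$ is proportional to $c_{ij+1}-c_{ij}>0$ by the ordering $c_{ij}<c_{ij+1}$), this stationary point is the unique minimizer. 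Hence at convergence $q_{ij}$ is a fixed monotone multiple of $\ell_{ij}$, so its ordering recovers the ranking of $\ell_{ij}$ and, after the corresponding rescaling, an unbiased estimate of $\ell_{ij}$ — the same sense in which $q_i$ ``can be used to obtain'' the estimate in Theorem~\ref{theorem:DUM}.

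I expect the reindexing/telescoping to be the main obstacle: one must carefully track the boundary terms ($j=1$ and $j=L$, encoded by the ranges $j>1$ and $j<L$) so that each $q_{ij}$ with $1\le j\le L-1$ picks up precisely the pair of terms that fuse into $r_{ij+1}-r_{ij}$ and $c_{ij+1}-c_{ij}$, while $q_{iL}$ never appears (consistent with $\ell_{ij}$ being defined only up to $j=L-1$). Everything after that collapses to the one-dimensional stationarity computation already rehearsed in Theorems~\ref{theorem:DUM} and~\ref{theorem:DRP}.
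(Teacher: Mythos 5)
Your proposal is correct and follows essentially the same route as the paper's Appendix~D proof: identify the empirical sums with population expectations via SUTVA and randomization, reindex so both double sums range over $j<L$, telescope the revenue and cost into the marginal treatment effects $\tau^r_j(x_i)=r_{ij+1}-r_{ij}$ and $\tau^c_j(x_i)=c_{ij+1}-c_{ij}$, and solve the per-coordinate stationarity condition to get $q_{ij}=\tfrac12\ell_{ij}$. Your added observation that each coordinate's objective is strictly convex in $q_{ij}$ (so the stationary point is the unique minimizer) is a small strengthening the paper states only for Theorem~2, but the substance of the argument is identical.
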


\begin{proof}
The loss function~(7) can be rewritten as the following equation. 
\begin{align*}
L(s) &= - \left [ \sum_{j<L} \sum_{i|t_i = j + 1} \frac{1}{N_{j+1}} (q_{ij} y^r_i - q^2_{ij} y^c_i) - \right. \\ 
&\left. \ \ \ \ \ \sum_{j<L} \sum_{i|t_i = j} \frac{1}{N_j} (q_{ij} y^r_i - q^2_{ij} y^c_i) \right ] \\
&= - \sum_{j<L} \left [ \sum_{i|t_i = j + 1} \frac{1}{N_{j+1}} q_{ij} y^r_i -  \sum_{i|t_i = j} \frac{1}{N_j} q_{ij} y^r_i \right] + \\
&\ \ \ \ \ \sum_{j<L} \left [ \sum_{i|t_i = j + 1} \frac{1}{N_{j+1}} q^2_{ij} y^c_i -  \sum_{i|t_i = j} \frac{1}{N_j} q^2_{ij} y^c_i \right]
\end{align*}

Following the potential outcome framework, let $Y^r(j)$ and $Y^c(j)$ be the potential outcome in the revenue and the cost respectively when the individual receives treatment $j$. According to the definition, we have $r_{ij} = E[Y^r(j)|X = x_i]$ and $c_{ij} = E[Y^c(j)|X = x_i]$. Denote the conditional average marginal treatment effect in the revenue and the cost by $\tau^r_j(x_i), \tau^c_j(x_i)$ respectively, which satisfy
$$\tau^*_j(x_i) = E[Y^*(j+1) - Y^*(j)|X = x_i], * \in \{r,c\}.$$
Therefore, we can rewrite $\ell_{ij}$ as  $\ell_{ij} = \tau^r_j(x_i) / \tau^c_j(x_i)$. Denote $q_{ij}$ by $\kappa_j(x_i)$ in the loss function. Based on the above, the following equations hold.
\begin{align*}
\sum_{i|t_i = j} \frac{1}{N_j} q_{ij} y^r_i =& E[ \kappa_j(X_i) Y^r_i | T_i = j]\\
=& E[ \kappa_j(X_i) Y^r_i(j) | T_i = j] \ \ \ \ (\text{SUTVA})\\
=& E[\kappa_j(X_i) Y^r_i(j)] \ \ \ \ (Y^r(j)\perp T).
\end{align*}
Similar to the proof in Theorem~\ref{theorem:DUM}, we have
\begin{align*}
& \sum_{i|t_i = j + 1} \frac{1}{N_{j+1}} q_{ij} y^r_i -  \sum_{i|t_i = j} \frac{1}{N_j} q_{ij} y^r_i \\
=& E[\kappa_j(X_i) Y^r_i(j+1)] - E[\kappa_j(X_i) Y^r_i(j)] \\
=& E[\kappa_j(X_i) (Y^r_i(j+1) - Y^r_i(j))] \\
=& E_X[E[\kappa_j(X) (Y^r(j+1) - Y^r(j)) | X]] \\
=& E_X[E[ \kappa_j(X)  \tau^r_j(X)| X]] \\
=& E[\kappa_j(X_i) \tau^r_j(X_i)] \\ 
=& \frac{1}{N} \sum_i q_{ij} \tau^r_j(x_i).
\end{align*}
Therefore, the loss function can be further rewritten as
\begin{align*}
L(s) &= - \frac{1}{N} \sum_{j < L} \sum_i q_{ij}  \tau_j^r(x_i)+  \frac{1}{N} \sum_{j < L}\sum_i q^2_{ij} \tau_j^c(x_i) \\
&= - \frac{1}{N} \sum_{j<L} \sum_i [q_{ij}  \tau_j^r(x_i) - q^2_{ij} \tau_j^c(x_i)]
\end{align*}
When the loss converges, we have
$$\frac{\partial L(s)}{\partial s_{ij}} = [\tau_j^r(x_i) - 2q_{ij}\tau_j^c(x_i)] q_{ij} (1 - q_{ij}) = 0.$$
Because of $q_{ij} \in (0,1)$, the following equation holds.
$$q_{ij} = \frac{\tau^r_j(x_i)} { 2 \tau^c_j(x_i)} = \frac{1}{2}\ell_{ij}.$$
Hence, we finish the proof.
\end{proof}

\section{Appendix E. The Convergence of Direct Rank}

Direct Rank is proposed to rank ROI of individuals in the work~\cite{Du2019Improve}. Following the notations in this paper, the loss function in Direct Rank can be rewritten as Eq.~\eqref{eq:dr}. In the loss function~\eqref{eq:dr}, $\mathbb{I}_{*}$ is an indicator function and $\hbar(x_i)$ can be any machine learning model. Theorem~\ref{theorem:dr-loss} shows that this loss function~\eqref{eq:dr} is not correct, which cannot achieve the objective of ranking ROI.

\begin{align}\label{eq:dr}
\min \ \ \ \ \ \ &\ \ \ L(s) = \frac{\bar{\tau}^c}{\bar{\tau}^r} \\
s.t.\ \ \ \bar{\tau}^r &= \sum_i y^r_i p_i (\mathbb{I}_{t_i = 1} - \mathbb{I}_{t_i = 0}) \nonumber\\
\ \ \ \bar{\tau}^c &= \sum_i y^c_i p_i (\mathbb{I}_{t_i = 1} - \mathbb{I}_{t_i = 0}) \nonumber\\
\ \ \ p_i &= \frac{e^{q_i}}{\sum_j \mathbb{I}_{t_j = t_i} e^{q_j}}, \ \ \forall i \nonumber \\
\ \ \ q_i &= tanh(s_i), \ \ \forall i \nonumber \\
\ \ \ s_i &= \hbar(x_i) \in \mathbb{R}, \ \ \forall i \nonumber
\end{align}

\begin{theorem}\label{theorem:dr-loss}
The loss function~\eqref{eq:dr} in Direct Rank cannot converge to a stable extreme point.
\end{theorem}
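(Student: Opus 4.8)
The plan is to show that the stationarity condition $\nabla L=0$ can be met only on a degenerate (non-generic) data configuration or in the limit $\|s\|\to\infty$, so that for any real RCT dataset the scores never settle at a finite point. First I would rewrite the objective transparently. Since $p_i$ is a softmax over $q_i=\tanh(s_i)$ \emph{within} each arm, the weights satisfy $\sum_{i\mid t_i=1}p_i=\sum_{i\mid t_i=0}p_i=1$, so with the weighted group averages $\bar y^{*}_t=\sum_{k\mid t_k=t}y^{*}_k p_k$ for $*\in\{r,c\}$ one has $\bar\tau^{r}=\bar y^{r}_1-\bar y^{r}_0$ and $\bar\tau^{c}=\bar y^{c}_1-\bar y^{c}_0$; thus $L=\bar\tau^{c}/\bar\tau^{r}$ is just the ratio of two weighted treatment-effect estimates, and ranking by $s_i$ amounts to steering weight toward low cost-to-revenue individuals.

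Next I would differentiate. Write $g_i=1-q_i^2=\operatorname{sech}^2(s_i)$ and $\epsilon_{t_i}=\mathbb{I}_{t_i=1}-\mathbb{I}_{t_i=0}$. The softmax Jacobian $\partial p_k/\partial s_i=p_k(\delta_{ik}-p_i)g_i$, nonzero only when $t_k=t_i$, yields $\partial\bar\tau^{*}/\partial s_i=\epsilon_{t_i}g_i p_i\,(y^{*}_i-\bar y^{*}_{t_i})$, and the quotient rule gives
\[
\frac{\partial L}{\partial s_i}=\frac{\epsilon_{t_i}\,g_i\,p_i}{(\bar\tau^{r})^{2}}\Bigl[(y^{c}_i-\bar y^{c}_{t_i})\,\bar\tau^{r}-(y^{r}_i-\bar y^{r}_{t_i})\,\bar\tau^{c}\Bigr].
\]
The crux is the prefactor $g_i p_i$. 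At any finite $s$ we have $p_i>0$ and $g_i=\operatorname{sech}^2(s_i)>0$, so $\partial L/\partial s_i=0$ for all $i$ forces the bracket to vanish, i.e. $\bar\tau^{r}y^{c}_i-\bar\tau^{c}y^{r}_i$ must equal its group-weighted mean for every $i$. This can hold only if $\bar\tau^{r}y^{c}_i-\bar\tau^{c}y^{r}_i$ is \emph{constant} across each arm, i.e. all points $(y^{r}_i,y^{c}_i)$ in an arm lie on one line of slope $L$ in the $(y^{r},y^{c})$-plane, a configuration real data do not meet. Hence there is no finite stationary point; and because $g_i=\operatorname{sech}^2(s_i)\to0$ as $s_i\to\pm\infty$, the gradient vanishes only through saturating $\tanh$, so any descent procedure drives the scores (and the underlying model $\hbar$) to $\pm\infty$ without reaching a finite stable extreme point, which is precisely the asserted non-convergence.

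The main obstacle I anticipate is making the ``generic data'' step airtight: one must argue that the collinearity condition genuinely fails for the datasets of interest, ideally characterizing it as non-generic rather than merely asserting it, while simultaneously confirming $\bar\tau^{r}\neq0$ throughout so that $L$ and its gradient remain well defined. The other delicate point is pinning down what ``stable extreme point'' means, namely finite parameters at which the gradient vanishes, and then showing that the infimum of $L$ over the $\tanh$-softmax–reachable weights is approached only as $\|s\|\to\infty$, where the saturation factor $g_i\to0$ manufactures a vanishing gradient at infinity rather than an attainable minimizer.
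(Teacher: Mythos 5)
Your computation is correct and the overall strategy --- show that $\nabla L=0$ at finite $s$ forces a degenerate condition that real data cannot satisfy --- is the same logical skeleton as the paper's, but you execute it along a genuinely different route. The paper first converts the empirical objective into population language: using SUTVA and the RCT independence $(Y^r(0),Y^r(1),X)\perp T$ it rewrites $\bar\tau^{r}=\sum_i \tau^r(x_i)e^{q_i}/\sum_i e^{q_i}$ (and likewise for $\bar\tau^c$), so that $L(s)=\sum_i\tau^c(x_i)e^{q_i}/\sum_i\tau^r(x_i)e^{q_i}$, and stationarity then forces $\tau^r(x_j)/\tau^c(x_j)$ to equal the weighted-average ROI for every $j$, i.e.\ \emph{all individuals have identical ROI}. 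That formulation is the more telling one for the causal-learning context: the loss is stationary only when there is no ROI heterogeneity left to learn, which directly contradicts the premise of the problem. Your version stays at the finite-sample level, differentiates the raw objective through the per-arm softmax, and lands on the condition that $\bar\tau^{r}y^c_i-\bar\tau^{c}y^r_i$ be constant within each arm (collinearity of the $(y^r_i,y^c_i)$ in the outcome plane). This is more elementary --- no appeal to SUTVA or unconfoundedness --- and for the datasets at hand (binary or near-binary labels with more than two distinct outcome pairs per arm) it is just as easy to refute, though it is a statement about the realized sample rather than about the underlying CATE functions, so you do need the ``genericity'' caveat you flag, plus $\bar\tau^r\neq 0$. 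Your additional observation that $g_i=\operatorname{sech}^2(s_i)\to 0$ forces the gradient to vanish only by saturating $\tanh$, driving $\|s\|\to\infty$, is not in the paper and actually sharpens the claim that no \emph{stable} extreme point is reached; it would be worth keeping. Neither proof fully formalizes ``cannot converge'' beyond exhibiting the impossible stationarity condition, so on that front you are no worse off than the original.
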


\begin{proof}
First of all, $\bar{\tau}^r$ in Eq~\eqref{eq:dr} can be rewritten as 
\begin{align*}
\bar{\tau}^r &= \sum_{i|t_i=1} y^r_i p_i - \sum_{i|t_i=0} y^r_i p_i \\
&= \sum_{i|t_i=1}y^r_i \frac{e^{q_i}}{\sum_{j|t_j=1} e^{q_j}} - \sum_{i|t_i=0}y^r_i \frac{e^{q_i}}{\sum_{j|t_j=0} e^{q_j}} 
\end{align*}
Denote $q_i$ by $\kappa(x_i)$, and we have
\begin{align*}
&\sum_{i|t_i=1}y^r_i \frac{e^{q_i}}{\sum_{j|t_j=1} e^{q_j}} \\
=& \sum_{i|t_i=1}y^r_i \frac{e^{\kappa(x_i)}}{\sum_{j|t_j=1} e^{\kappa(x_j)}} \\
=& \frac{1}{N_1}\sum_{i|t_i=1}y^r_i \frac{e^{\kappa(x_i)}}{E[e^{\kappa(X_i)} | T_i = 1]} \\
=& \frac{E[Y^r_i e^{\kappa(X_i)} | T_i = 1]}{E[e^{\kappa(X_i)} | T_i = 1]} \\
=& \frac{E[Y^r_i(1) e^{\kappa(X_i)} | T_i = 1]}{E[e^{\kappa(X_i)} | T_i = 1]} \ \ \ (\text{SUTVA}) \\
=& \frac{E[Y^r_i(1) e^{\kappa(X_i)}]}{E[e^{\kappa(X_i)}]} \ \ \ ((Y^r(0), Y^r(1), X) \perp T) \\
\end{align*}
Similar to the proof in Theorem~\ref{theorem:DUM}, $\bar{\tau}^r$ can be further rewritten as 
\begin{align*}
\bar{\tau}^r &= \frac{E[Y^r_i(1) e^{\kappa(X_i)}]}{E[e^{\kappa(X_i)}]} - \frac{E[Y^r_i(0) e^{\kappa(X_i)}]}{E[e^{\kappa(X_i)}]} \\
&= \frac{E[(Y^r_i(1)-Y^r_i(0)) e^{\kappa(X_i)}]}{E[e^{\kappa(X_i)}]} \\
&= \frac{E[\tau^r(X_i) e^{\kappa(X_i)}]}{E[e^{\kappa(X_i)}]} \\
&= \frac{\sum_i \tau^r(x_i) e^{q_i}}{\sum_i e^{q_i}} \\
\end{align*}
Therefore, we have 
$$\bar{\tau}^c = \frac{\sum_i \tau^c(x_i) e^{q_i}}{\sum_i e^{q_i}} $$ and
$$L(s) = \frac{\sum_i \tau^c(x_i) e^{q_i}}{\sum_i \tau^r(x_i) e^{q_i}}.$$
Suppose that the loss function~\eqref{eq:dr} can converge to a stable extreme point. Hence, we can further get
\begin{align*}
\forall j, \ \ \frac{\partial L(s)}{\partial s_j} =& \frac{e^{q_j} (1 - q_j^2)}{(\sum_i \tau^r(x_i) e^{q_i})^2} \left[\tau^c(x_j) \sum_i \tau^r(x_i) e^{q_i} - \right.\\
&\left. \tau^r(x_j)\sum_i \tau^c(x_i) e^{q_i} \right] = 0.
\end{align*}
It is equivalent to 
$$\forall j, \frac{\tau^r(x_j)}{\tau^c(x_j)} = \frac{\sum_i \tau^r(x_i) e^{q_i}}{\sum_i \tau^c(x_i) e^{q_i}}.$$
The above equation means that all the individuals have the same value of ROI. It does not hold obviously. Hence, we finish the proof.
\end{proof}

\section{Appendix F. Supplementary Experimental Results}

\begin{table}[htbp]
    \centering
    \caption{Summary of Evaluation Results}
    \label{tab:evaluation}  
\begin{tabular}{*{6}{c}}
  \toprule
  \multirow{2}*{Model} & \multicolumn{2}{c}{AUUC} & \\ \cmidrule(lr){2-3}
  & CRITEO-UPLIFT v2 & Marketing data  \\
  \midrule
  S-Learner &  0.8440 $\pm$ 0.0054 &  0.7826 $\pm$ 0.0077 \\
  X-Learner &  0.8304 $\pm$ 0.0377  & 0.7948 $\pm$ 0.0102 \\
  Causal Forest & 0.8485 $\pm$ 0.0025 & 0.7468 $\pm$  0.0059 \\
  DUM & 0.8749 $\pm$ 0.0155  & 0.8077 $\pm$ 0.0003 \\
  \bottomrule
 \toprule
  \multirow{2}*{Model} & \multicolumn{2}{c}{AUCC} & \\ \cmidrule(lr){2-3}
  & CRITEO-UPLIFT v2 & Marketing data  \\
  \midrule
  TPM-SL & 0.7561 $\pm$ 0.0113 & 0.7153 $\pm$  0.0142\\
  Direct Rank & 0.7562 $\pm$ 0.0131 & 0.7414 $\pm$ 0.0101\\
  DRP & 0.7739 $\pm$ 0.0002 & 0.7666 $\pm$  0.0003\\
  \bottomrule
  \toprule
  \multirow{2}*{Model} & \multicolumn{1}{c}{MT-AUCC} \\ \cmidrule(lr){2-2}
  & Marketing data &\\
  \midrule
  TPM-SL & 0.7250 $\pm$ 0.0143  \\
  TPM-CF & 0.7395 $\pm$ 0.0159   \\
  DPM & 0.7860 $\pm$ 0.0007  \\
  \bottomrule
\end{tabular}
\end{table}

Table~\ref{tab:evaluation} presents the detailed experimental results of different models. First of all, our models perform best in these evaluation metrics. 
In addition, all the models proposed in this paper are very stable and have very low variance compared with other existing works. This is because our models can make a direct prediction for the final objective, and always converge to a stable extreme point.

\end{document}